\documentclass[DIV=calc,paper=a4,fontsize=10pt]{scrartcl}

\usepackage{amsmath,amssymb,color,epsfig,fullpage} 
\usepackage{amsthm}
\usepackage{verbatim,fancyvrb}
\usepackage{alltt}
\usepackage{tikz}
\usepackage{enumerate}
\usepackage{caption}
\usepackage{subfigure}
\usepackage[]{algorithm}
\usepackage{algpseudocode}
\usepackage{stmaryrd}
\usepackage{afterpage}
\usepackage{url}
\usepackage{framed}

\newtheorem{definition}{Definition}[section]
\newtheorem{theorem}[definition]{Theorem}
\newtheorem{corollary}[definition]{Corollary}
\newtheorem{proposition}[definition]{Proposition}
\newtheorem{lemma}[definition]{Lemma}


\def\N{{\mathbb N}}

\def\<#1,#2>{\langle#1,#2\rangle} 

\def\R{{\mathbb R}}
\def\T{{\mathcal T}}

\def\P{{\mathcal P}}

\newcommand{\Oh}{\mathcal O}


\DeclareMathOperator*{\argmin}{argmin}

\DeclareMathOperator{\dtw}{dtw}

\providecommand{\norm}[1]{\left\Vert #1 \right\Vert}


\makeatletter
\def\section{\@startsection{section}{1}{\z@}{-3.5ex plus -1ex minus
 -.2ex}{2.3ex plus .2ex}{\normalsize\bf}}
\def\subsection{\@startsection{subsection}{2}{\z@}{-3.25ex plus -1ex
 minus -.2ex}{1.5ex plus .2ex}{\normalsize\bf}}
\makeatother

\newcommand*{\tran}{^{\mkern-1.5mu\mathsf{T}}}

\newcommand*{\Ex  }{\Phi}
\newcommand*{\Ey}{\Psi}

\newcommand{\commentout}[1]{}







\newcommand{\abs}[1]{\mathop{\left\lvert #1 \right\rvert}} 
\newcommand{\args}[1]{\mathop{\left( #1 \right)}} 

\newcommand{\cbrace}[1]{\mathop{\left\{ #1 \right\}}}

\newcommand{\argsS}[2]{\mathop{\left( #1 \right)#2}} 

\newcommand{\normS}[2]{\mathop{\left\lVert #1 \right\rVert#2}}


\renewcommand{\S}[1]{{\mathcal{#1}}}           	



\renewenvironment{cases}{%
\left\{\begin{array}{c@{\quad : \quad}l}}%
{%
\end{array}\right.}

\algblock{OptParFor}{EndOptParFor}
\algnewcommand\algorithmicparfor{\textbf{for}}
\algnewcommand\algorithmicpardo{\textbf{do (optionally in parallel threads)}}
\algnewcommand\algorithmicendparfor{\textbf{end\ for}}
\algrenewtext{OptParFor}[1]{\algorithmicparfor\ #1\ \algorithmicpardo}
\algrenewtext{EndOptParFor}{\algorithmicendparfor}
\usepackage{framed}

\title{Nonsmooth Analysis and Subgradient Methods\\ for Averaging in Dynamic Time Warping Spaces} 

\author{David Schultz and Brijnesh Jain \\ Technische Universit\"at Berlin, Germany} 

\date{} 


\begin{document}

\maketitle 

\paragraph*{Abstract.}
Time series averaging in dynamic time warping (DTW) spaces has been successfully applied to improve pattern recognition systems. This article proposes and analyzes subgradient methods for the problem of finding a sample mean in DTW spaces. The class of subgradient methods generalizes existing sample mean algorithms such as DTW Barycenter Averaging (DBA). We show that DBA is a majorize-minimize algorithm that converges to necessary conditions of optimality after finitely many iterations.  Empirical results show that for increasing sample sizes the proposed stochastic subgradient (SSG) algorithm is more stable and finds better solutions in shorter time than the DBA algorithm on average. Therefore, SSG is useful in online settings and for non-small sample sizes. The theoretical and empirical results open new paths for devising sample mean algorithms: nonsmooth optimization methods and modified variants of pairwise averaging methods.

\clearpage

\begin{framed}
\begin{small}
\tableofcontents
\end{small}
\end{framed}

\clearpage

\section{Introduction}
Inspired by the sample mean in Euclidean spaces, the goal of time series averaging is to construct a time series that is located in the center of a given sample of time series. One common technique to average time series is based on first aligning the time series and then synthesizing the aligned time series to an average. In doing so, the alignment of time series is typically performed with respect to the dynamic time warping (DTW) distance. Several variations of this approach have been successfully applied to improve nearest neighbor classifiers and to formulate centroid-based clustering algorithms in DTW spaces \cite{Abdulla2003,Hautamaki2008,Oates1999,Petitjean2016,Rabiner1979,Soheily-Khah2016}. 

Among the different variations, one research direction poses time series averaging as an optimization problem \cite{Hautamaki2008,Petitjean2011,Soheily-Khah2016}: Suppose that $\S{X} = \args{x^{(1)}, \dots, x^{(N)}}$ is a sample of $N$ time series $x^{(i)}$. Then a (sample) mean in DTW spaces is any time series that minimizes the Fr\'echet function \cite{Frechet1948}
\[
F(x) = \frac{1}{N}\sum_{k=1}^N \dtw^2\!\args{x, x^{(k)}},
\]
where $\dtw$ is the DTW distance. A polynomial-time algorithm for finding a global minimum of the non-differentiable, non-convex Fr\'echet function is unknown. Therefore, an ongoing research problem is to devise improved approximation algorithms that efficiently find acceptable sub-optimal solutions. 
Currently, the most popular method for minimizing the Fr\'echet function is the DTW Barycenter Averaging (DBA) algorithm proposed by Petitjean et al.~\cite{Petitjean2011}. Empirical results have shown that the DBA algorithm outperforms competing mean algorithms \cite{Petitjean2011,Soheily-Khah2015}. 

This article proposes and investigates subgradient methods for minimizing the Fr\'echet function. The main contributions are as follows:
\begin{enumerate}[(i)]
\itemsep0em
\item A stochastic subgradient mean algorithm that outperforms DBA for non-small sample sizes.
\item Necessary and sufficient conditions of optimality.
\item Finite convergence of  DBA  to solutions satisfying the necessary conditions of optimality.
\end{enumerate} 
Subgradient methods are nonsmooth optimization \cite{Bagirov2014} techniques that operate very similar to gradient descent methods, but replace the gradient with a subgradient. The concept of subgradient serves to generalize gradients under mild conditions that hold for the Fr\'echet function.
We propose two subgradient methods, the majorize-minimize mean (MM) algorithm and the stochastic subgradient mean (SSG) algorithm. We show that the MM algorithm is equivalent to the DBA algorithm. 
Formulating DBA as a nonsmooth optimization method following a majorize-minimize \cite{Hunter2004} principle considerably simplifies its analysis and comparison to SSG.

Both algorithms DBA and SSG are iterative methods that repeatedly update a candidate solution. The main difference between both algorithms is that DBA is a batch and SSG a stochastic optimization method: While the DBA algorithm computes an exact subgradient on the basis of all sample time series, the SSG algorithm estimates a subgradient of the Fr\'echet function on the basis of a single randomly picked sample time series. Consequently, every time the algorithms have processed the entire sample, the SSG algorithm has performed $N$ updates, whereas the DBA algorithm has performed a single update. 
The different update rules of both subgradient methods have the following implications:
\begin{enumerate}
\item Theoretical implication: 
The SSG algorithm is not a descent method. During optimization, the value of the Fr\'echet function can increase. In contrast, Petitjean et al.~\cite{Petitjean2016} proved that DBA is a descent method. Moreover, we show that DBA converges to solutions satisfying necessary conditions of optimality after a finite number of updates. Necessary conditions of optimality characterize the form of local minimizers of the Fr\'echet function. 
If a solution satisfies the sufficient conditions, we can conclude local minimality.

\item Practical implication: Empirical results suggest that the SSG algorithm is more stable and finds better solutions in substantially less time than the DBA algorithm provided that the sample size $N$ is sufficiently large, that is $N>50$ as a rule of thumb. 
\end{enumerate}
In summary, the DBA algorithm has stronger theoretical properties than the SSG algorithm, whereas SSG exhibits superior performance than DBA for non-small sample sizes.

\medskip

The rest of the article is structured as follows: Section \ref{sec:background_and_related_work} provides background material and discusses related work. In Section \ref{sec:properties}, we study analytical properties of the minimization problem. Section \ref{sec:algorithms} proposes subgradient methods for minimizing the Fr\'echet function. In Section \ref{sec:experiments}, we present and discuss empirical results. Finally, Section \ref{sec:conclusion} concludes. Proofs are delegated to the appendix.

\section{Background and Related Work}\label{sec:background_and_related_work}

This section introduces the DTW distance, the Fr\'echet function, and finally reclassifies existing mean algorithms.

\subsection{The Dynamic Time Warping Distance}

\begin{figure}
\centering
\includegraphics[width=0.98\textwidth]{./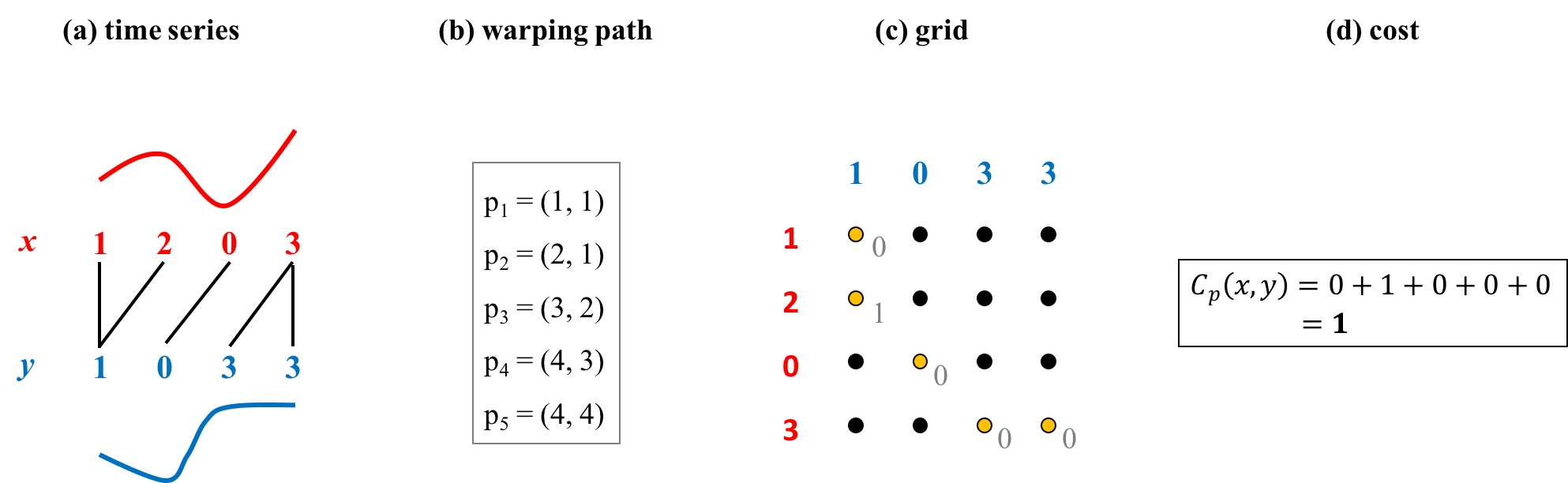}
\caption{\textbf{(a)} Two time series $x$ and $y$ of length $n = 4$. The red and blue numbers are the elements of the respective time series. \textbf{(b)} Warping path $p = (p_1, \dots, p_5)$ of length $L = 5$. The points $p_l = (i_l, j_l)$ of warping path $p$ align elements $x_{i_l}$ of $x$ to elements $y_{j_l} $ of $y$ as illustrated in (a) by black lines. \textbf{(c)} The $4 \times 4$ grid showing how warping path $p$ moves from the upper left to the lower right corner as indicated by the orange balls. The numbers attached to the orange balls are the squared-error costs of the corresponding aligned elements. \textbf{(d)} The cost $C_p(x, y)$ of aligning $x$ and $y$ along warping path $p$.}
\label{fig:wpath}
\end{figure}

We begin with introducing the DTW distance and refer to Figure \ref{fig:wpath} for illustrations of the concepts. 

\medskip

A \emph{time series} $x$ of \emph{length} $m$ is an ordered sequence $x = (x_1, \dots, x_m)$ consisting of \emph{elements} $x_i \in \R^d$ for every \emph{time point} $i \in [m]$, where we denote $[m] = \{1, \dots, m \}$. A time series is said to be \emph{univariate} if $d = 1$ and \emph{multivariate} if $d > 1$. For a fixed $d \in \N$, we denote by $\S{T}_*$ the set of all time series of finite length and by $\S{T}_m$ the set of all time series of length $m \in \N$. The DTW distance is a distance function on $\S{T}_*$ based on the notion of warping path.
\begin{definition}
Let $m, n \in \N$. A \emph{warping path} of order $m \times n$ is a sequence $p = (p_1 , \dots, p_L)$ of $L$ points $p_l = (i_l,j_l) \in [m] \times [n]$ such that
\begin{enumerate}
\item $p_1 = (1,1)$ and $p_L = (m,n)$ \hfill\emph{(\emph{boundary conditions})}
\item $p_{l+1} - p_{l} \in \cbrace{(1,0), (0,1), (1,1)}$ for all $l \in [L-1]$ \hfill\emph{(\emph{step condition})} 
\end{enumerate}
\end{definition}
The set of all warping paths of order $m \times n$ is denoted by $\S{P}_{m,n}$. A warping path of order $m \times n$ can be thought of as a path in a $[m] \times [n]$ grid, where rows are ordered top-down and columns are ordered left-right. The boundary condition demands that the path starts at the upper left corner and ends in the lower right corner of the grid. The step condition demands that a transition from point to the next point moves a unit in exactly one of the following directions: down, diagonal, and right. 

A warping path $p = (p_1, \dots, p_L)\in \S{P}_{m,n}$ defines an alignment (or warping) between time series $x = (x_1, \dots, x_m)$ and $y = (y_1, \dots, y_n)$. Every point $p_l = (i_l,j_l)$ of warping path $p$ aligns element $x_{i_l}$ to element $y_{j_l}$. The \emph{cost} of aligning time series $x$ and $y$ along warping path $p$ is defined by
\begin{equation*}
C_p(x,y) = \sum_{l=1}^L \normS{x_{i_l} - y_{j_l}}{^2},
\end{equation*}
where the Euclidean norm $\norm{\cdot}$ on $\R^d$ is used as \emph{local cost function}. Then the DTW distance between two time series minimizes the cost of aligning both time series over all possible warping paths. 

\begin{definition}
Let $x$ and $y$ be two time series of length $m$ and $n$, respectively. The \emph{DTW distance} between $x$ and $y$ is defined by
\begin{equation*}
\dtw(x,y) = \min \cbrace{\sqrt{C_p(x,y)} \,:\, p \in \S{P}_{m,n}}.
\end{equation*}
An \emph{optimal warping path} is any warping path $p \in \S{P}_{m,n}$ satisfying $\dtw(x, y) = \sqrt{C_p(x,y)}$.
\end{definition}
A \emph{DTW space} is any subset $\S{T} \subseteq \S{T}_*$ endowed with the DTW distance. In particular, $\S{T}_*$ and $\S{T}_m$ are DTW spaces. Even if the underlying local cost function is a metric, the induced DTW distance is generally only a pseudo-semi-metric satisfying 
\begin{enumerate}
\item $\dtw(x, y) \geq 0$
\item $\dtw(x, x) = 0$
\end{enumerate}
for all $x, y \in \S{T}_*$. Computing the DTW distance and deriving an optimal warping path is usually solved by applying techniques from dynamic programming \cite{Sakoe1978}.

\subsection{Fr\'echet Functions}

Throughout this contribution, we consider a restricted form of the Fr\'echet function. 

\begin{definition}
Let $n \in \N$. Suppose that $\S{X} = \args{x^{(1)}, \dots, x^{(N)}}$ is a sample of $N$ time series $x^{(k)} \in \S{T}_*$. Then the function 
\begin{align*}
F: \S{T}_n \rightarrow \R, \quad x \mapsto \frac{1}{N}\sum_{i = 1}^N \dtw^2\args{x,x^{(k)}}
\end{align*}
is the \emph{Fr\'echet function} of sample $\S{X}$. The value $F(x)$ is the \emph{Fr\'echet variation} of $\S{X}$ at $x \in \S{T}_n$.
\end{definition}

The domain $\S{T}_n$ of the Fr\'echet function is restricted to the subset of all time series of fixed length 
$n$. In contrast, the sample time series $x^{(1)}, \dots, x^{(N)}$ can have arbitrary and different lengths.
\begin{definition}
The \emph{sample mean set} of $\S{X}$ is the set 
\[
\S{F} = \cbrace{z \in \S{T}_n \,:\, F(z) \leq F(x) \text{ for all } x \in \S{T}_n}.
\]
Each element of $\S{F}$ is called \emph{(sample) mean} of $\S{X}$. 
\end{definition}
A sample mean is a time series that minimizes the Fr\'echet function $F$. A sample mean of $\S{X}$ exists \cite{Jain2016b}, but is not unique, in general. We refer to the problem of minimizing the Fr\'echet function as the sample mean problem.

\subsection{Related Work}\label{sec:related-work}

The majority of time series averaging methods reported in the literature can be classified into two categories: (1) asymmetric-batch methods, and (2) symmetric-incremental methods. The asymmetric-symmetric dimension determines the type of average and the batch-incremental dimension determines the strategy with which a sample of time series is synthesized to one of both types of averages. We neither found work on symmetric-batch nor on asymmetric-incremental methods. There is a well-founded explanation for the absence of symmetric-batch algorithms, whereas asymmetric-incremental algorithms apparently have not been considered as a possible alternative to existing methods. A third category that has scarcely been applied to time series averaging are meta-heuristics. An exception are genetic algorithms proposed by Petitjean et al.~\cite{Petitjean2012}. In this section, we discuss existing methods within the symmetric-asymmetric and batch-incremental dimensions.

\subsubsection{Asymmetric vs.~Symmetric Averages}
The asymmetric-symmetric dimension defines the form of an average. To describe the different forms of an average, we assume that $p$ is an optimal warping path between time series $x$ and $y$. 

\paragraph*{Asymmetric Averages.}
An asymmetric average of $x$ and $y$ is computed as follows: (i) select a reference time series, say $x$; (ii) compute an optimal warping path $p$ between $x$ and $y$; and (iii) average $x$ and $y$ along path $p$ with respect to the time axis of reference $x$.

The different variants of the third step \cite{Abdulla2003,Petitjean2011,Rabiner1979} have the following properties in common \cite{Kruskal1983}: (i) they admit simple extension to averaging several time series; and (ii) the length of the resulting average coincides with the length of the reference $x$. Therefore, the form of an asymmetric average depends on the choice of the reference time series. Hence, an asymmetric average for a given optimal warping path between two time series is not well-defined, because there is no natural criterion for choosing a reference. 

\paragraph*{Symmetric Averages.}
According to Kruskal and Liberman \cite{Kruskal1983}, the symmetric average of time series $x$ and $y$ along an optimal warping path $p$ is a time series $z$ of the same length as warping path $p$ and consists of elements $z_l = \args{x_{i_l} + y_{j_l}}/2$ for all points $p_l = (i_l, j_l)$ of $p$. Not only the elements $x_i$ and $y_j$ at the warped time points $i$ and $j$ but also the warped time points $i$ and $j$ themselves can be averaged. 

Symmetric averages $z$ of time series $x$ and $y$ are well-defined for a given optimal warping path $p$, but generally have more time points (a finer sampling rate) than $x$ and $y$. Kruskal and Liberman \cite{Kruskal1983} pointed to different methods for adjusting the sampling rate of $z$ to the sampling rates of $x$ and $y$, which results in averages whose length is adapted to the length of $x$ and $y$.

\subsubsection{Batch vs.~Incremental Averaging}

The batch-incremental dimension describes the strategy for combining more than two time series to an average.

\paragraph*{Batch Averaging.}

Batch averaging first warps all sample time series into an appropriate form and then averages the warped time series. We distinguish between asymmetric-batch and symmetric-batch methods. 

The \emph{asymmetric-batch} method presented by Lummis \cite{Lummis1973} and Rabiner \& Wilpon \cite{Rabiner1979} first chooses an initial time series $z$ as reference. Then the following steps are repeated until termination: (i) warp all time series onto the time axis of the reference $z$; and (ii) assign the average of the warped time series as new reference $z$. By construction, the length of the reference $z$ is identical in every iteration. Consequently, the final solution (reference) of the asymmetric-batch method depends on the choice of the initial solution. 

Since the early work in the 1970ies, different variants of the asymmetric-batch method have been proposed and applied. Oates et al.~\cite{Oates1999} and Abdulla et al.~\cite{Abdulla2003} applied an asymmetric-batch method confined to a single iteration. Hautamaki et al.~\cite{Hautamaki2008} completed the approach by \cite{Abdulla2003} by iterating the update step several times. With the DBA algorithm, Petitjean et al.~\cite{Petitjean2011} presented a sound solution in full detail. In the same spirit, Soheily-Khah et al.~\cite{Soheily-Khah2016} generalized the asymmetric-batch method to weighted and kernelized versions of the DTW distance.  

The \emph{symmetric-batch} method suggested by Kruskal and Liberman \cite{Kruskal1983} requires to find an optimal warping in an $N$-dimensional hypercube, where $N$ is the sample size. Since finding an optimal common warping path for $N$ sample time series is computationally intractable, symmetric-batch methods have not been further explored. 

\paragraph*{Incremental Averaging.}
Incremental methods synthesize several sample time series to an average time series by pairwise averaging. The general procedure repeatedly applies the following steps: (i) select a pair of time series $x$ and $y$; (ii) compute average $z$ of time series $x$ and $y$;  (iii) include $z$ into the sample; (iv) optionally remove $x$ and/or $y$ from the sample.  Incremental methods differ in the strategy of selecting the next pair of time series in step (i) and removing time series from the sample in step (iv). Pairwise averaging can take either of both forms, asymmetric and symmetric. 

Kruskal and Liberman \cite{Kruskal1983} presented a general description of symmetric-incremental methods in 1983 that includes progressive approaches as applied in multiple sequence alignment in bioinformatics \cite{Gusfield1997}. The most cited concrete realizations of the Kruskal-Liberman method are weighted averages for self-organizing maps \cite{Somervuo1999}, two \emph{nonlinear alignment and averaging filters} (NLAAF) \cite{Gupta1996}, and the \emph{prioritized shape averaging} (PSA) algorithm \cite{Niennattrakul2009}. For further variants of incremental methods we refer to \cite{Niennattrakul2012,Ongwattanakul2009,Srisai2009}.

Finally, we note that there is apparently no work on asymmetric-incremental methods. The proposed SSG algorithm fills this gap. As we will see later, empirical results suggest to transform existing symmetric-incremental to asymmetric-incremental methods.

\subsubsection{Empirical Comparisons}

In experiments, the asymmetric-batch method DBA outperformed the symmetric-incremental methods NLAAF and PSA \cite{Petitjean2011,Soheily-Khah2015}. This result shifted the focus from symmetric-incremental to asymmetric-batch methods. Since symmetric-batch methods are considered as computationally intractable, the open question is how asymmetric-incremental methods will behave and perform. This question is partly answered in this article.

\section{Analytical Properties of the Sample Mean Problem}\label{sec:properties}

This section first analyzes the local properties of Fr\'echet functions. Then we present necessary and sufficient conditions of optimality. For the sake of clarity, the main text derives all results for the special case of univariate time series of fixed length $n$. The general case of multivariate sample time series of variable length is considered in Section \ref{sec:generalizations}. Therefore, we use the following uncluttered notation in this and the following sections:

\begin{framed}
\noindent
\begin{tabular}{l@{\;$:$\quad}l}
$\S{T}$ & \emph{set of univariate time series $x=(x_1,\dots,x_n)$ of fixed length $n \in \N$, where $x_i \in \R$} \\
$\S{T}^N$ & \emph{$N$-fold Cartesian product, where $N \in \N$}\\
$\S{P}$ & \emph{set $\S{P}_{n,n}$ of all warping paths of order $n \times n$}\\
$\S{P}_*(x, y)$ & \emph{set of optimal warping paths between time series $x$ and $y$}
\end{tabular}
\end{framed}

\subsection{Decomposition of the Fr\'echet Function}\label{subsec:decomposition}

A discussion of analytical properties of the Fr\'echet function is difficult since standard analytical concepts such as locality, continuity, and differentiability are unknown in DTW spaces. 
To approach the sample mean problem analytically, we change the domain of the Fr\'echet function $F$ from the DTW space $\T$ to the Euclidean space $\R^n$. This modification does not change the sample mean set, but local properties of $F$ on different domains may differ. By saying the Fr\'echet function $F:\S{T} \rightarrow \R$ has some analytical property, we tacitly assume its Euclidean characterization.

\medskip

We decompose the Fr\'echet function into a mathematically more convenient form. Suppose that $\S{X} = \args{x^{(1)},\dots,x^{(N)}} \in \S{T}^N$ is a sample of time series. Substituting the definition of the DTW distance into the Fr\'echet function of $\S{X}$ gives
\begin{align}\label{eq:frechet:expanded}
F(x) = \frac{1}{N}\sum_{k=1}^N \dtw^2\!\args{x, x^{(k)}} = \frac{1}{N}\sum_{k=1}^N \;\min_{p^{(k)} \in \S{P}} \;C_{p^{(k)}}\args{x, x^{(k)}},
\end{align}
where $C_p(x, y)$ is the cost of aligning time series $x$ and $y$ along warping path $p$. Interchanging summation and the $\min$-operator in Eq.~\eqref{eq:frechet:expanded} yields
\begin{align}\label{eq:alg:frechet}
F(x) = \min_{p^{(1)} \in \S{P}} \cdots \min_{p^{(N)} \in \S{P}} \frac{1}{N}\sum_{k=1}^N C_{p^{(k)}}\args{x, x^{(k)}}.
\end{align}
To simplify the notation in Eq.~\eqref{eq:alg:frechet}, we introduce configurations and component functions. A \emph{configuration} of warping paths is an ordered list $\S{C} = \args{p^{(1)}, \dots, p^{(N)}} \in \S{P}^N$, where warping path $p^{(k)}$ is associated with time series $x^{(k)}$, $k \in [N]$. A \emph{component function} of $F(x)$ is a function of the form 
\[
F_{\S{C}}: \R^n \rightarrow \R, \quad x \mapsto \frac{1}{N}\sum_{k=1}^N C_{p^{(k)}}\!\args{x, x^{(k)}},
\]
where $\S{C} = \args{p^{(1)}, \dots, p^{(N)}}$ is a configuration. Using the notions of configuration and component function, we can equivalently rewrite the Fr\'echet function as
\begin{align}\label{eq:F_as_min}
F(x) = \min_{\S{C} \in \S{P}^N} F_{\S{C}}(x).
\end{align}
We say, $F_{\S{C}}$ is \emph{active} at $x$ if $F_{\S{C}}(x) = F(x)$. By $\S{A}_F(x)$ we denote the set of active component functions of $F$ at time series $x$. A configuration ${\S{C}}$ is \emph{optimal} at $x$ if $F_{\S{C}}$ is an active component at $x$. In this case, every $p^{(k)} \in \S{C}$ is an optimal warping path between $x$ and $x^{(k)}$.

\subsection{Local Lipschitz Continuity of the Fr\'echet Function}\label{ss:localLipschitz}

By definition of the cost functions $C_p$ we find that every component function $F_{\S{C}}$ is convex and differentiable. Thus, Eq.~\eqref{eq:F_as_min} implies that the Fr\'echet function $F$ is the pointwise minimum of finitely many convex differentiable functions. Since convexity is not closed under $\min$-operations, the Fr\'echet function is non-convex. Similarly, the Fr\'echet function $F$ is not differentiable, because differentiability is also not closed under $\min$-operations. 

We show that the Fr\'echet function $F$ is locally Lipschitz continuous.\footnote{A function $f:\R^n\rightarrow \R$ is locally Lipschitz at point $x \in \R^n$ if there are scalars $L > 0$ and $\varepsilon > 0$ such that $\abs{f(y)-f(z)} \leq L\norm{y-z}$ for all $y, z \in \S{B}(x, \varepsilon) = \cbrace{u \in \R^n \,:\, \norm{u-x} \leq \varepsilon}$.} Local Lipschitz continuity of $F$ follows from two properties: (i) continuously differentiable functions are locally Lipschitz; and (ii) the local Lipschitz property is closed under the $\min$-operation.
 
Any locally Lipschitz function is differentiable almost everywhere by Rademacher's Theorem \cite{Evans1992}. In addition, locally Lipschitz functions admit a concept of generalized gradient at non-differentiable points, called subdifferential henceforth. The subdifferential $\partial F(x)$ of the Fr\'echet function $F$ at point $x$ is a non-empty, convex, and compact set. At differentiable points $x$, the subdifferential $\partial F(x)$ coincides with the gradient $\nabla F(x)$, that is $\partial F(x) = \cbrace{\nabla F(x)}$. At non-differentiable points $x$, we have
\[
\nabla F_{\S{C}}(x) \in \partial F(x) 
\]
for all active component functions $F_{\S{C}} \in \S{A}_F(x)$. The elements $g \in \partial F(x)$ are called the subgradients of $F$ at $x$. We refer to \cite{Bagirov2014} for a definition of subdifferential for locally Lipschitz functions. Subdifferentials and subgradients have been originally defined for non-differentiable convex functions and later extended to locallly Lipschitz continuous functions by Clarke \cite{Clarke1990}. Throughout this article, we assume Clarke's definition of subdifferential and subgradient.

We conclude this section with introducing critical points. A point $x \in \T$ is called \emph{critical} if $0 \in \partial{F(x)}$. 
Examples of critical points are the global minimizers of active component functions.
Figure~\ref{fig:ex_FrechetFunction} depicts an example of a Fr\'echet function and some of its critical points.

\begin{figure}[t]
\centering
\includegraphics[width=0.6\textwidth]{./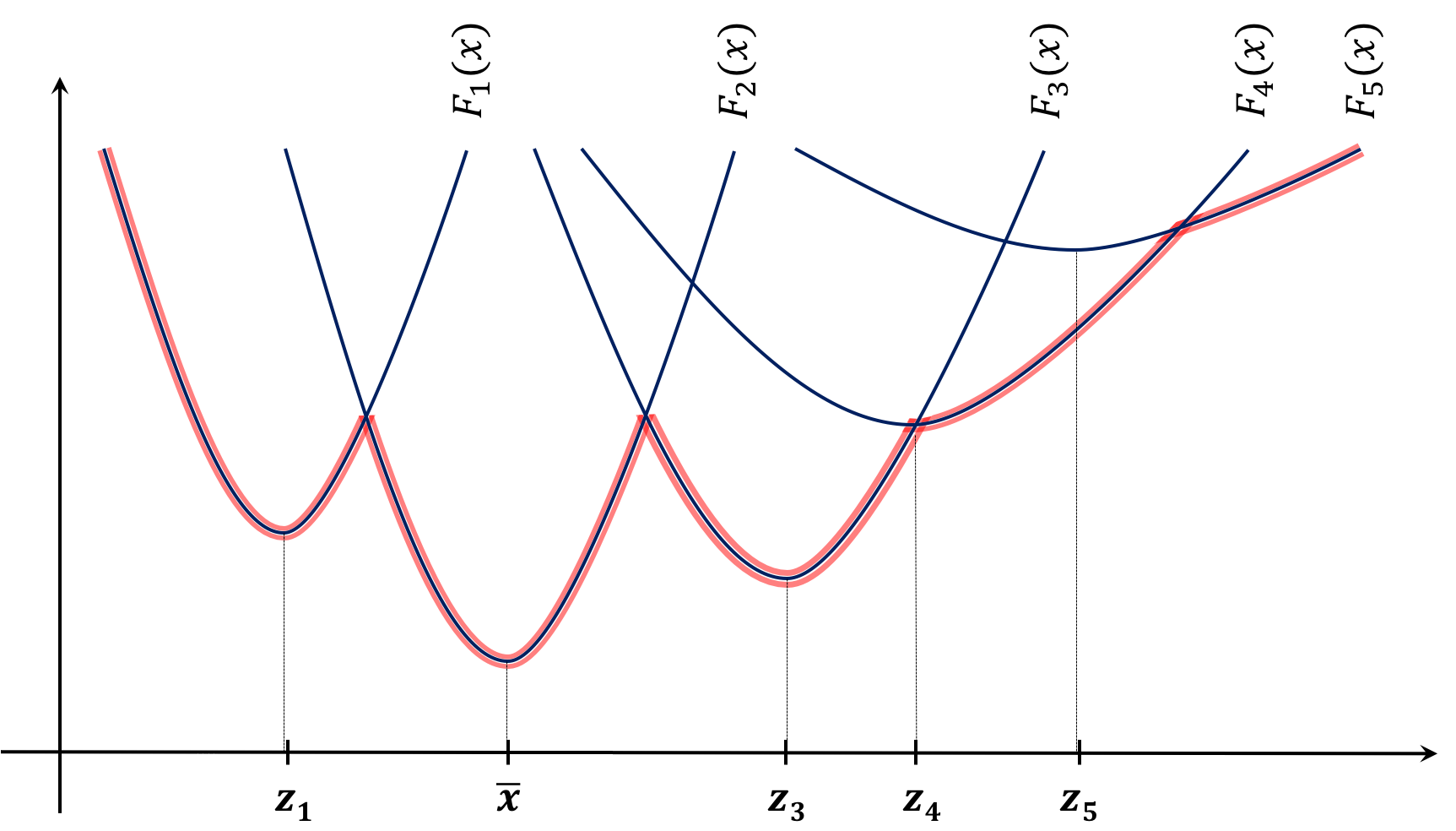}
\caption{Schematic depiction of Fr\'echet function $F$ of a sample $\S{X} \in \S{T}^N$ as pointwise minimum of the component functions $F_1, \dots, F_5$ indexed by integers rather than configurations from $\S{P}^N$. The component functions are shown by blue lines and the Fr\'echet function is depicted by the red line. The red line also shows the surface of the points at which the respective component functions are active. Though the component functions are convex and differentiable, the Fr\'echet function is neither convex nor differentiable but differentiable almost everywhere. The minimizer of component function $F_2$ is the unique global minimizer of $F$ and therefore the unique mean $\bar{x}$ of sample $\S{X}$. The minimizers $z_1$ and $z_3$ of the component functions $F_1$ and $F_3$, resp., are local minimizers of $F$. 
The minimizer $z_4$ of component function $F_4$ is a non-differentiable critical point of $F$. 
Finally, the minimizer $z_5$ of component function $F_5$ is neither a local minimizer nor a non-differentiable critical point of $F$, because $F_5$ is not active at $z_5$.}
\label{fig:ex_FrechetFunction}
\end{figure}

\subsection{Subgradients of the Fr\'echet Function}
This section aims at describing an arbitrary subgradient of $F$ for each point $x \in \T$.
As discussed in Section \ref{ss:localLipschitz} the gradient of an active component function $F_{\S{C}}$ at $x$ is a subgradient of $F$ at $x$. Since for each $x \in \T$ there is an active component function $F_{\S{C}} \in \S{A}_F(x)$, it is sufficient to specify gradients of component functions. For this, we introduce the notions of warping and valence matrix. 
\begin{definition}
Let $p \in \S{P}$ be a warping path. 
\begin{enumerate}
\item The \emph{warping matrix} of $p$ is a matrix $W \in \{0,1\}^{n \times n}$ with elements
\begin{equation*}
W_{i,j} = \begin{cases} 
1 & (i,j) \in p \\ 
0 & \text{otherwise} 
\end{cases}.
\end{equation*}
\item The \emph{valence matrix} of $p$ is the diagonal matrix $V \in \N^{n \times n}$ with integer elements
\begin{align*}
V_{i,i} = \sum_{j=1}^n W_{i,j}.
\end{align*}
\end{enumerate}
\end{definition}
Figure \ref{fig:valence} provides an example of a warping and valence matrix. The warping matrix is a matrix representation of its corresponding warping path. The valence matrix is a diagonal matrix, whose elements count how often an element of the first time series is aligned to an element of the second one.

The next result describes the gradients of a component function. 
\begin{proposition}\label{prop:DF_C} Let $\S{X} = \args{x^{(1)},\dots,x^{(N)}} \in \S{T}^N$ be a sample of time series and let $\S{C} \in \S{P}^N$ be a configuration of warping paths. The gradient of component function $F_{\S{C}}$ at $x \in \S{T}$ is of the form
\begin{align}\label{eq:prop:DF_C}
\nabla F_{\S{C}}(x) = \frac{2}{N} \,\sum_{k=1}^N \Big(V^{(k)}x - W^{(k)}x^{(k)}\Big),
\end{align}
where $V^{(k)}$ and $W^{(k)}$ are the valence and warping matrix of warping path $p^{(k)} \in \S{C}$ associated with time series $x^{(k)}\in \S{X}$. 
\end{proposition}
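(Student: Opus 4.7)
The plan is to reduce the claim to a single-sample computation: since $F_{\S{C}}(x) = \frac{1}{N}\sum_{k=1}^N C_{p^{(k)}}(x,x^{(k)})$ is by construction a finite sum, it suffices to compute $\nabla_x C_p(x,y)$ for one warping path $p \in \S{P}$ and one companion time series $y$, and verify that $\nabla_x C_p(x,y) = 2\bigl(V x - W y\bigr)$, where $V$ and $W$ are the valence and warping matrices of $p$. The proposition then follows by linearity of the gradient.

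First I would rewrite the cost along $p$ in matrix form. By definition,
\[
C_p(x,y) = \sum_{l=1}^L (x_{i_l} - y_{j_l})^2,
\]
and every index pair $(i_l,j_l)$ coincides with exactly those $(i,j)$ for which $W_{i,j} = 1$ (note that the warping path visits each cell at most once, so no multiplicity correction is needed). Hence
\[
C_p(x,y) = \sum_{i=1}^n \sum_{j=1}^n W_{i,j}\,(x_i - y_j)^2.
\]
This is a sum of smooth (convex, quadratic) functions in $x$, so differentiability is immediate and the gradient may be computed component-wise.

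Next I would compute the $i$-th partial derivative. Since only the terms with the same first index contribute,
\[
\frac{\partial C_p}{\partial x_i}(x,y) = \sum_{j=1}^n W_{i,j}\cdot 2(x_i - y_j) = 2\Bigl(x_i \sum_{j=1}^n W_{i,j} - \sum_{j=1}^n W_{i,j}\,y_j\Bigr).
\]
Invoking the definition of the valence matrix $V_{i,i} = \sum_{j=1}^n W_{i,j}$ and recognising the second sum as the $i$-th entry of $Wy$, the bracket equals $V_{i,i}\,x_i - (Wy)_i = (Vx)_i - (Wy)_i$, since $V$ is diagonal. Stacking the $n$ coordinates yields $\nabla_x C_p(x,y) = 2(Vx - Wy)$.

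Finally I would assemble the pieces: applying this identity to each summand of $F_{\S{C}}$ with $p = p^{(k)}$ and $y = x^{(k)}$, and using linearity of the gradient together with the factor $1/N$, produces exactly \eqref{eq:prop:DF_C}. I do not anticipate any genuine obstacle; the only point that requires a moment of care is the bookkeeping between the sum indexed over warping-path points $l \in [L]$ and the sum indexed over grid cells $(i,j) \in [n]\times[n]$, which is handled cleanly by the indicator representation $W_{i,j} = \mathbf{1}_{\{(i,j)\in p\}}$.
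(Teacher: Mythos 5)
Your proof is correct, but it takes a different route from the paper's. You differentiate the cost directly in coordinates, using the indicator representation $C_p(x,y)=\sum_{i,j}W_{i,j}(x_i-y_j)^2$ and reading off $\partial C_p/\partial x_i = 2\big(V_{i,i}x_i-(Wy)_i\big)$; the only structural fact you need is that the step condition forces the path to visit each grid cell at most once, so the sum over $l\in[L]$ really is the sum over cells with $W_{i,j}=1$. The paper instead first introduces the time warping embeddings $\Phi,\Psi\in\R^{L\times n}$ with $C_p(x,y)=\normS{\Phi x-\Psi y}{^2}$, proves the identities $W=\Phi\tran\Psi$ and $V=\Phi\tran\Phi$ as a separate lemma, and then obtains the gradient by the matrix-calculus rule $\nabla\normS{\Phi x-\Psi y}{^2}=2\Phi\tran(\Phi x-\Psi y)$. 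Your argument is more elementary and entirely self-contained, which is a genuine advantage for this one proposition; the paper's detour through embeddings pays off elsewhere, since the same matrices $\Phi,\Psi$ and the identities $W=\Phi\tran\Psi$, $V=\Phi\tran\Phi$ are reused in the proof of the necessary conditions of optimality and in the generalization to variable-length and multivariate time series, where the componentwise bookkeeping you perform by hand would have to be redone. Both arguments rely on the same underlying fact (distinctness of the path points), which you correctly flag; there is no gap.
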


\begin{figure}[t]
\centering
\includegraphics[width=0.9\textwidth]{./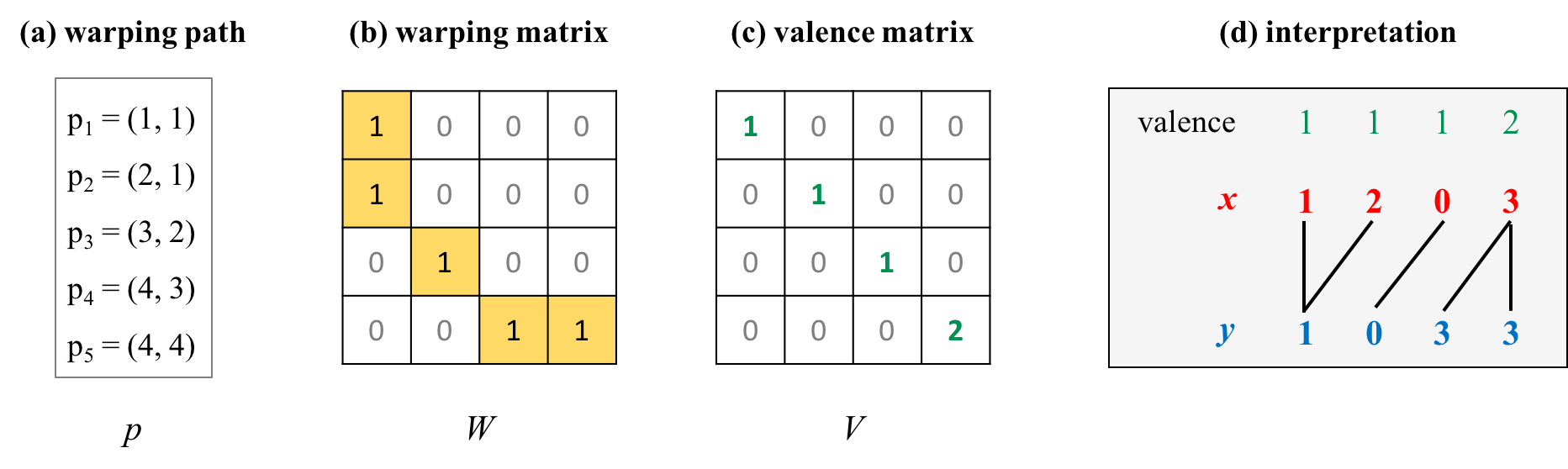}
\caption{Illustration of warping and valence matrix. Box (a) shows the warping path $p$ of Figure \ref{fig:wpath}. Box (b) shows the warping matrix $W$ of warping path $p$. The points $p_l = (i_l, j_l)$ of warping path $p$ determine the ones in the warping matrix $W$. Box (c) shows the valence matrix $V$ of warping path $p$. The matrix $V$ is a diagonal matrix, whose elements $V_{i,i}$ are the row sums of $W$. Box (d) interprets the valence matrix $V$. The valence $V_{i,i}$ of element $x_i$ of time series $x$ is the number of elements in time series $y$ that are aligned to $x_i$ by warping path $p$. Thus, the valence $V_{i,i}$ is the number of black lines emanating from element $x_i$.}
\label{fig:valence}
\end{figure}

\subsection{Necessary and Sufficient Conditions of Optimality}

The sample mean problem is posed as an unconstrained optimization problem. The classical mathematical approach to an optimization problem studies the four basic questions: (i) the formulation of necessary conditions, (ii) the formulation of sufficient conditions, (iii) the question of the existence of solutions, and (iv) the question of the uniqueness of a solution.
Existence of a sample mean has been proved \cite{Jain2016b} and non-uniqueness follows by constructing examples. In this section, we answer the remaining two questions (i) and (ii). 

The next theorem presents the necessary conditions of optimality in terms of warping and valence matrices. For an illustration of the theorem, we refer to Figure \ref{fig:mean}.
\begin{theorem} \label{theorem:form}
Let $F$ be the Fr\'echet function of sample $\S{X} = \args{x^{(1)},\dots,x^{(N)}} \in \S{T}^N$. 
If $z \in \S{T}$ is a local minimizer of $F$, then there is a configuration $\S{C} \in \S{P}^N$ such that the following conditions are satisfied:
\begin{description}
\item[(C1)] $F(z) = F_{\S{C}}(z)$.
\item[(C2)] We have
\begin{align}\label{eq:theorem:form}
z = \argsS{\sum_{k = 1}^N V^{(k)}}{^{-1}} \args{\sum_{k = 1}^N W^{(k)} \,x^{(k)}},
\end{align}
where $V^{(k)}$ and $W^{(k)}$ are the valence and warping matrix of $p^{(k)}\in \S{C}$ for all $k \in [N]$. 
\end{description}
\end{theorem}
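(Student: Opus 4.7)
The plan is to pick an active configuration at $z$ so that condition (C1) holds by construction, then exploit the convexity and differentiability of the corresponding component function to derive (C2) from an ordinary first-order optimality condition.

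First I would use the decomposition $F(x) = \min_{\S{C} \in \S{P}^N} F_{\S{C}}(x)$ together with the finiteness of $\S{P}^N$ to choose any $\S{C} \in \S{A}_F(z)$; by the definition of active this already establishes (C1). The next step is to promote the assumption that $z$ is a local minimizer of $F$ to the stronger statement that $z$ is a local minimizer of $F_{\S{C}}$. The key observation is that $F_{\S{C}}(x) \ge F(x)$ for every $x$ while $F_{\S{C}}(z) = F(z)$, so on any neighborhood of $z$ on which $F(x) \ge F(z)$ we also have $F_{\S{C}}(x) \ge F_{\S{C}}(z)$. Because $F_{\S{C}}$ is a sum of squared Euclidean norms, it is convex and differentiable, hence any local minimum is a global minimum and satisfies the stationarity condition $\nabla F_{\S{C}}(z) = 0$.

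Now I would apply Proposition \ref{prop:DF_C} to rewrite $\nabla F_{\S{C}}(z) = 0$ as
\[
\sum_{k=1}^N V^{(k)} \, z \;=\; \sum_{k=1}^N W^{(k)} \, x^{(k)}.
\]
To solve for $z$ it remains to verify that $V := \sum_k V^{(k)}$ is invertible. Since $V$ is diagonal with $V_{i,i} = \sum_k V^{(k)}_{i,i}$, it suffices to show $V^{(k)}_{i,i} \ge 1$ for every $i \in [n]$ and every $k \in [N]$. This is immediate from the boundary and step conditions: a warping path starts at $(1,1)$, ends at $(n,n)$, and moves only by $(1,0)$, $(0,1)$, or $(1,1)$, so its row coordinate increases from $1$ to $n$ in unit steps, forcing at least one point in every row $i$. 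Consequently $V_{i,i} \ge N \ge 1$, the matrix $V$ is invertible, and solving for $z$ yields the formula in (C2).

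The main obstacle in the argument is the transfer of local optimality from the nonsmooth, nonconvex function $F$ to the smooth convex component $F_{\S{C}}$; once this transfer is available the remainder is routine bookkeeping around Proposition \ref{prop:DF_C} together with a short combinatorial check that the valence sum is strictly positive on the diagonal. Notably, no appeal to Clarke's subdifferential is required, because picking a single active component at $z$ isolates a smooth witness function whose ordinary gradient vanishes.
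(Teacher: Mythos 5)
Your proof is correct and follows essentially the same route as the paper's: both select an active component function $F_{\S{C}}$ at $z$ (which yields (C1) by definition), exploit the fact that $F_{\S{C}}$ majorizes $F$ and touches it at $z$ so that local minimality of $z$ for $F$ transfers to the smooth convex function $F_{\S{C}}$, and then obtain (C2) by setting $\nabla F_{\S{C}}(z) = 0$ via Proposition~\ref{prop:DF_C}. The only differences are cosmetic: the paper packages the transfer step as a two-case argument by contradiction (comparing $z$ with the unique minimizer of $F_{\S{C}}$), whereas you argue it directly, and you additionally make explicit the invertibility of $\sum_{k} V^{(k)}$ -- a point the paper leaves implicit -- by noting that the boundary and step conditions force every diagonal valence to be at least one.
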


Since (C1) and (C2) are necessary but not sufficient conditions, there are time series that satisfy both conditions but are not local minimizers of the Fr\'echet function. Condition (C2) implies that $z$ is the unique minimizer of the component function $F_{\S{C}}$ (cf.~\ref{ss:proofOfMainThm}). As illustrated in Figure \ref{fig:ex_FrechetFunction}, three cases can occur: (i) $z$ is a local minimizer of $F$, (ii) $z$ is a non-differentiable critical point, and (iii) $z$ is neither a local minimizer nor a non-differentiable critical point. Condition (C1) eliminates solutions of the third case, which are minimizers of inactive component functions. Hence, solutions satisfying the necessary conditions of optimality include local minimizers and non-differentiable critical point. The second result of this section presents a sufficient condition of a local minimum.

\begin{figure}[t]
\centering
\includegraphics[width=0.8\textwidth]{./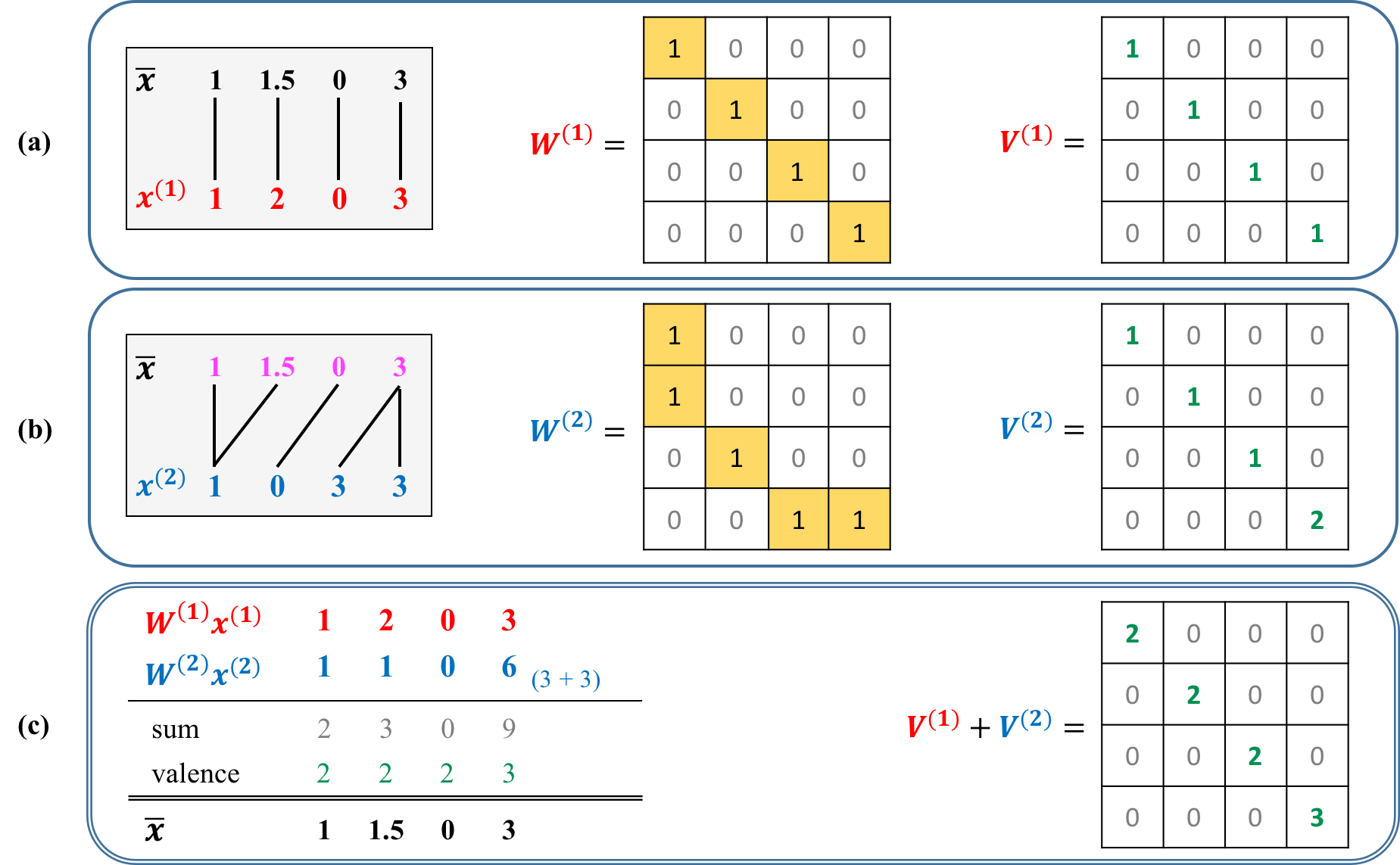}
\caption{Mean $\bar{x}$ of time series $x^{(1)}$ and $x^{(2)}$. The two boxes (a) and (b) show the warping matrix $W^{(k)}$ and the valence matrix $V^{(k)}$ of the optimal warping path between mean $\bar{x}$ and time series $x^{(k)}$ for $k \in \cbrace{1, 2}$. Box (c) shows computation of the mean $\bar{x}$ using the warping and valence matrices. The matrix on the right hand side shows the sum $V = V^{(1)} + V^{(2)}$ of both valence matrices. The first two lines on the left hand side show the results of the matrix multiplication $y^{(k)} = W^{(k)} \cdot x^{(k)}$. The third line shows the sum $y = y^{(1)} + y^{(2)}$. The fourth line shows the valences corresponding to the diagonal elements of $V$. The valences count how many elements of the sample time series are aligned to a given element of the mean. The last line shows the mean $\bar{x}$ obtained by element-wise division of the sum by the valences. In matrix notation, the mean is given by $\bar{x} = V^{-1}y$. }
\label{fig:mean}
\end{figure}

\begin{proposition}\label{prop:sufficient-condition}
Let $F$ be the Fr\'echet function of sample $\S{X}$ and let $z \in \S{T}$ be a time series. Suppose that there is a configuration $\S{C} \in \S{P}^N$ such that $z$ satisfies the necessary conditions (C1) and (C2). If $\S{C}$ is unique, then $z$ is a local minimizer. 
\end{proposition}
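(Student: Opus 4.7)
The plan is to exploit the representation $F(x)=\min_{\S{C}\in\S{P}^N}F_{\S{C}}(x)$ as a pointwise minimum of finitely many smooth convex functions, and to show that the uniqueness hypothesis makes $F$ coincide locally with the single active component $F_{\S{C}}$, for which $z$ is already the global minimizer.

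First I would interpret conditions (C1) and (C2). Condition (C1) says $F_{\S{C}}\in\S{A}_F(z)$, while the uniqueness assumption is to be read as $\S{A}_F(z)=\{F_{\S{C}}\}$, i.e.\ no other configuration is optimal at $z$. Condition (C2) is the normal equation $\nabla F_{\S{C}}(z)=0$: by Proposition~\ref{prop:DF_C}, $\nabla F_{\S{C}}(x)=\frac{2}{N}\bigl(\sum_kV^{(k)}\bigr)x-\frac{2}{N}\sum_kW^{(k)}x^{(k)}$, and (C2) inverts this equation. Since every diagonal entry of $\sum_kV^{(k)}$ is a positive integer (each time point is aligned at least once, by the boundary conditions on warping paths), this diagonal matrix is positive definite, so $F_{\S{C}}$ is strictly convex quadratic and $z$ is its unique global minimizer.

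Next I would use continuity and finiteness of $\S{P}^N$ to localize. For every configuration $\S{C}'\in\S{P}^N\setminus\{\S{C}\}$ that is not active at $z$, we have $F_{\S{C}'}(z)>F_{\S{C}}(z)$. Each $F_{\S{C}'}$ is continuous, so there exists $\varepsilon_{\S{C}'}>0$ with $F_{\S{C}'}(y)>F_{\S{C}}(y)$ for every $y\in\S{B}(z,\varepsilon_{\S{C}'})$. Taking $\varepsilon=\min_{\S{C}'\neq\S{C}}\varepsilon_{\S{C}'}$, which is a minimum over the finite set $\S{P}^N\setminus\{\S{C}\}$ and hence strictly positive, yields a neighborhood $U=\S{B}(z,\varepsilon)$ on which $F_{\S{C}}(y)<F_{\S{C}'}(y)$ for all $\S{C}'\neq\S{C}$. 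Consequently, $F(y)=\min_{\S{C}'}F_{\S{C}'}(y)=F_{\S{C}}(y)$ for every $y\in U$.

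Finally I would close the argument: for all $y\in U$,
\[
F(y)=F_{\S{C}}(y)\geq F_{\S{C}}(z)=F(z),
\]
where the inequality uses that $z$ is the global minimizer of the convex quadratic $F_{\S{C}}$ (established via (C2) above). Hence $z$ is a local minimizer of $F$. The only mildly delicate step is justifying positive definiteness of $\sum_kV^{(k)}$, which follows from the boundary conditions forcing every row of each warping matrix to be nonzero; everything else is an elementary continuity/finiteness argument.
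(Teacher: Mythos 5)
Your proof is correct and takes essentially the same route as the paper's: uniqueness forces $F$ to coincide with the single active component $F_{\S{C}}$ in a neighborhood of $z$, where (C2) makes $z$ the minimizer of that strictly convex quadratic. You in fact supply two details the paper leaves implicit --- the finiteness/continuity argument justifying the local identification $F = F_{\S{C}}$ and the positive definiteness of $\sum_{k} V^{(k)}$ --- and you finish via global minimality of the convex quadratic rather than the paper's appeal to second-order sufficient conditions, which is an equivalent conclusion.
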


Uniqueness of configuration $\S{C}$ in Prop.~\ref{prop:sufficient-condition} is equivalent to the statement that the active set at $z$ is of the form $\S{A}_F(z) = \cbrace{F_{\S{C}}}$. Thus, the sufficient condition of Prop.~\ref{prop:sufficient-condition} is difficult to verify. One naive way to test the condition is to enumerate all optimal configurations at $z$. If there is exactly one optimal configuration, then $z$ is a local minimizer.

We conclude this section with a discussion on the form of a mean. Any sample mean is of the form of Eq.~\eqref{eq:theorem:form}, because global minimizers are also local minimizers. As expected, a sample mean in DTW spaces generalizes the arithmetic mean in Euclidean spaces. The term in Eq.~\eqref{eq:theorem:form} is a normalized sum of (transformed) sample time series. Intuitively speaking, the difference to the arithmetic mean is that the sample time series are aligned by warping transformations and the normalization factor becomes a diagonal matrix (inverted sum of the valence matrices) which normalizes each element of the time series separately.
Indeed, if every warping matrix is the identity matrix, then Eq.~\eqref{eq:theorem:form} is the sample mean of $\S{X}$ in the Euclidean sense.

\section{Mean Algorithms}\label{sec:algorithms}

In the previous section, we have shown that Fr\'echet functions are locally Lipschitz continuous. For such functions, the field of nonsmooth analysis has developed several optimization methods \cite{Bagirov2014}. One of the most simplest nonsmooth optimization techniques are subgradient methods originally developed for minimization of convex functions \cite{Shor1985}. This section adopts three (generalized) subgradient methods for minimizing the Fr\'echet function: (1) the subgradient mean algorithm, (2) the majorize-minimize mean algorithm, and (3) the stochastic version of the subgradient mean algorithm.
We provide MATLAB and Java implementations of the last two algorithms at \cite{Schultz2016Code}.

\subsection{The Subgradient Mean Algorithm}

Subgradient methods for minimizing a locally Lipschitz continuous function $f(x)$ look similar to gradient-descent algorithms. The update rule of the standard subgradient method is of the form
\begin{align}\label{eq:SG_update_rule}
z^{(t+1)} = z^{(t)} - \eta^{(t)} \cdot g^{(t)},
\end{align}
where $z^{(t)}$ is the solution at iteration $t$, $\eta^{(t)}$ is the $t$-th step size, and $g^{(t)} \in \partial f(z^{(t)})$ is an arbitrary subgradient of $f$ at $z^{(t)}$. Since the standard subgradient method is not a descent method, it is common to keep track of the best solution $z_*^{(t)}$ found so far, that is 
\[
f\!\args{z_*^{(t)}} = \min \cbrace{f\!\args{z_*^{(t-1)}}, f\!\args{z^{(t)}}}.
\]
with $z_*^{(0)} = z^{(0)}$. We apply this idea to the sample mean problem by using the gradient of an arbitrary active component function (cf.~Prop.~\ref{prop:DF_C}) as subgradient of the Fr\'echet function $F$ at the current solution $z^{(t)}$.

\begin{footnotesize}
\begin{algorithm}[t]
\caption{\footnotesize Subgradient Mean Algorithm}\label{alg:SG}
\begin{algorithmic}[1]
\footnotesize 
\Procedure{SG}{$x^{(1)},\dots,x^{(N)}$}
\State initialize solution $z \in \S{T}$
\State initialize best solution $z_* = z$
\Repeat
\OptParFor{all $k \in [N]$}
\State{compute optimal warping path $p^{(k)} \in \S{P}_*\!\args{z, x^{(k)}}$ between $z$ and $x^{(k)}$}
\State{derive valence matrix $V^{(k)}$ and warping matrix $W^{(k)}$ of $p^{(k)}$}
\EndOptParFor
\State{update solution $z$ according to the rule
\[
z \gets z - \eta \, \dfrac{2}{N} \sum_{k=1}^N \args{V^{(k)}z - W^{(k)}x^{(k)}}
\]
\State{update best solution $z_*$ such that $F(z_*) = \min\cbrace{F(z_*), F(z)}$}
\State{adjust step size $\eta$}
}
\Until{termination} 
\State{\Return $z_*$ as approximation of a mean}
\EndProcedure
\end{algorithmic}
\end{algorithm}
\end{footnotesize}

Algorithm \ref{alg:SG} outlines the basic procedure of the subgradient mean (SG) algorithm for minimizing the Fr\'echet function $F(x)$ of a sample $\S{X} = \args{x^{(1)},\dots,x^{(N)}} \in \S{T}^N$ of time series. 
The SG algorithm consists of the following steps:

\smallskip
\noindent 
1.\ \emph{Initialize}:
The performance of the SG algorithm depends on the choice of initial solution. Efficient and simple strategies to initialize the solution in \texttt{line 2} are randomly selecting a sample time series $x^{(k)}$ or generating a random time series. Several authors independently suggested to select a medoid time series from $\S{X}$  \cite{Hautamaki2008,Oates1999,Petitjean2016}, which requires $\Oh(N^2)$ computations of the DTW distance. For large sample size $N$, selecting a medoid of $\S{X}$ is computationally infeasible. An efficient alternative is to select a medoid of a randomly chosen sub-sample of $\S{X}$. 

\smallskip
\noindent 
2.\ \emph{Align sample time series}:
In \texttt{line 5-8}, the SG algorithm cycles through all time series of the sample $\S{X}$ and determines a configuration of optimal warping paths $p^{(k)}$ between the current solution $z$ and the sample time series $x^{(k)}$. Next, the valence and warping matrix of the optimal warping paths $p^{(k)}$ are derived. This step implicitly determines an active component function $F_{\S{C}}$ at the current solution $z$. 
It is the computationally most demanding part of the algorithm, but can be easily executed in $N$ parallel threads, where each thread executes \texttt{line 6-7} for a different $k \in [N]$. 

\smallskip
\noindent 
3.\ \emph{Update solution}: \texttt{Line 9} implements the update rule of the standard subgradient method corresponding to Eq.~\eqref{eq:SG_update_rule}, where the subgradient corresponds to the gradient of the active component function $F_\S{C}$.

\smallskip
\noindent 
4.\ \emph{Update best solution}:
\texttt{Line 10} keeps track of the best solution $z_*$ found so far. Updating the best solution evaluates the Fr\'echet function $F(z)$ at the new solution $z$, which requires $N$ computations of the DTW distance. To keep the computational effort low, the resulting optimal warping paths are used in the alignment step 2.~of the next iteration. Consequently, this step consumes no substantial additional computational resources, except at the last iteration.

\smallskip
\noindent 
5.\ \emph{Adjust step size}: In \texttt{line 11}, the step size is adjusted according to some schedule. There are several schedules for adapting the step size including the special case of a constant step size. The choice of a schedule for adjusting the step size affects the convergence behavior of the algorithm. 

\smallskip
\noindent 
6.\ \emph{Terminate}: A good termination criterion is important, because a premature termination by a weak criterion may result in a useless solution, whereas a too severe criterion may result in an iteration process that is computationally infeasible. We suggest the following two termination criteria: (T1) Terminate when a maximum number of iterations has been exceeded; and (T2) terminate when no improvement is observed after some iterations. Improvement in (T2) is measured by the evaluations of the Fr\'echet function in \texttt{line 10}. 

\medskip

We conclude this section by placing the SG algorithm into the context of existing mean algorithms. As the DBA algorithm, the SG algorithm falls into the class of asymmetric-batch methods (cf.~Section \ref{sec:related-work}). The difference between existing asymmetric-batch methods and the SG algorithm is the way with which the sample time series are weighted when synthesized to an average. Existing asymmetric-batch approaches use a fixed weighting scheme, whereas the SG algorithm admits a more general and flexible weighting scheme based on adaptive step sizes. In Section \ref{subsec:MM} we show that this flexibility includes the DBA algorithm. This means, DBA is a special case of the SG algorithm.

\subsection{The Majorize-Minimize Mean Algorithm}\label{subsec:MM}

\begin{footnotesize}
\begin{algorithm}[t]
\caption{\footnotesize Majorize-Minimize Mean Algorithm}\label{alg:MM}
\begin{algorithmic}[1]
\footnotesize 
\Procedure{MM}{$x^{(1)},\dots,x^{(N)}$}
\State{initialize solution $z \in \S{T}$}
\Repeat
\State{\texttt{//*** Majorization ***********************************************************//}}
\OptParFor{all $k \in [N]$}
\State{compute optimal warping path $p^{(k)} \in \S{P}_*\!\args{z, x^{(k)}}$ between $z$ and $x^{(k)}$}
\State{derive valence matrix $V^{(k)}$ and warping matrix $W^{(k)}$ of $p^{(k)}$}
\EndOptParFor
\State{\texttt{//*** Minimization **********************************************************//}}
\State{update solution $z$ according to the rule
\[
z \gets \argsS{\sum_{k=1}^N V^{(k)}}{^{-1}} \args{\sum_{k=1}^N W^{(k)}x^{(k)}}
\]
}
\Until{$F(z)$ is equal to the Fr\'echet variation of the previous iteration} 
\State{\Return $z$}
\EndProcedure
\end{algorithmic}
\end{algorithm}
\end{footnotesize}

The second mean algorithm exploits the necessary conditions of optimality stated in Theorem \ref{theorem:form} and belongs to the class of majorize-minimize algorithms \cite{Hunter2004}. This class includes the EM algorithm as special case and provides access to general convergence results \cite{Zangwill1969}. 
Algorithm \ref{alg:MM} outlines the basic procedure of the majorize-minimize mean (MM) algorithm. After initialization, the MM algorithm repeatedly alternates between majorization and minimization until convergence (cf.~Figure \ref{fig:ex_FrechetFunction_MM}):
\paragraph*{Majorization.}
A real-valued function $G: \S{T} \rightarrow \R$ majorizes the Fr\'echet function $F$ at time series $z \in \S{T}$ if $F(z) = G(z)$ and $F(x) \leq G(x)$ for all time series $x \in \S{T}$. Typically, one chooses a majorizing function that is much easier to minimize than the original function. For example, any active component function $F_{\S{C}} \in \S{A}_F(z)$ majorizes the Fr\'echet function $F$ at $z$. \texttt{Line 5-8} implicitly determine such a component function $F_{\S{C}}$ by constructing a configuration $\S{C}$ of optimal warping paths $p^{(k)} \in \S{P}_*\!\args{z, x^{(k)}}$ between the current solution $z$ and sample time series $x^{(k)}$.

\paragraph*{Minimization.} The minimization step in \texttt{line 10} computes the minimum of the majorizing function $F_{\S{C}}$ as next solution of the iteration process. Since an active component function is convex and differentiable, its unique minimizer is easy to compute by setting the gradient $\nabla F_{\S{C}}(z)$ given in Eq.~\eqref{eq:prop:DF_C} to zero and solving for $z$. The resulting minimizer takes the form of the necessary condition (C2) stated in Theorem \ref{theorem:form}.

\begin{figure}[t]
\centering
\includegraphics[width=0.6\textwidth]{./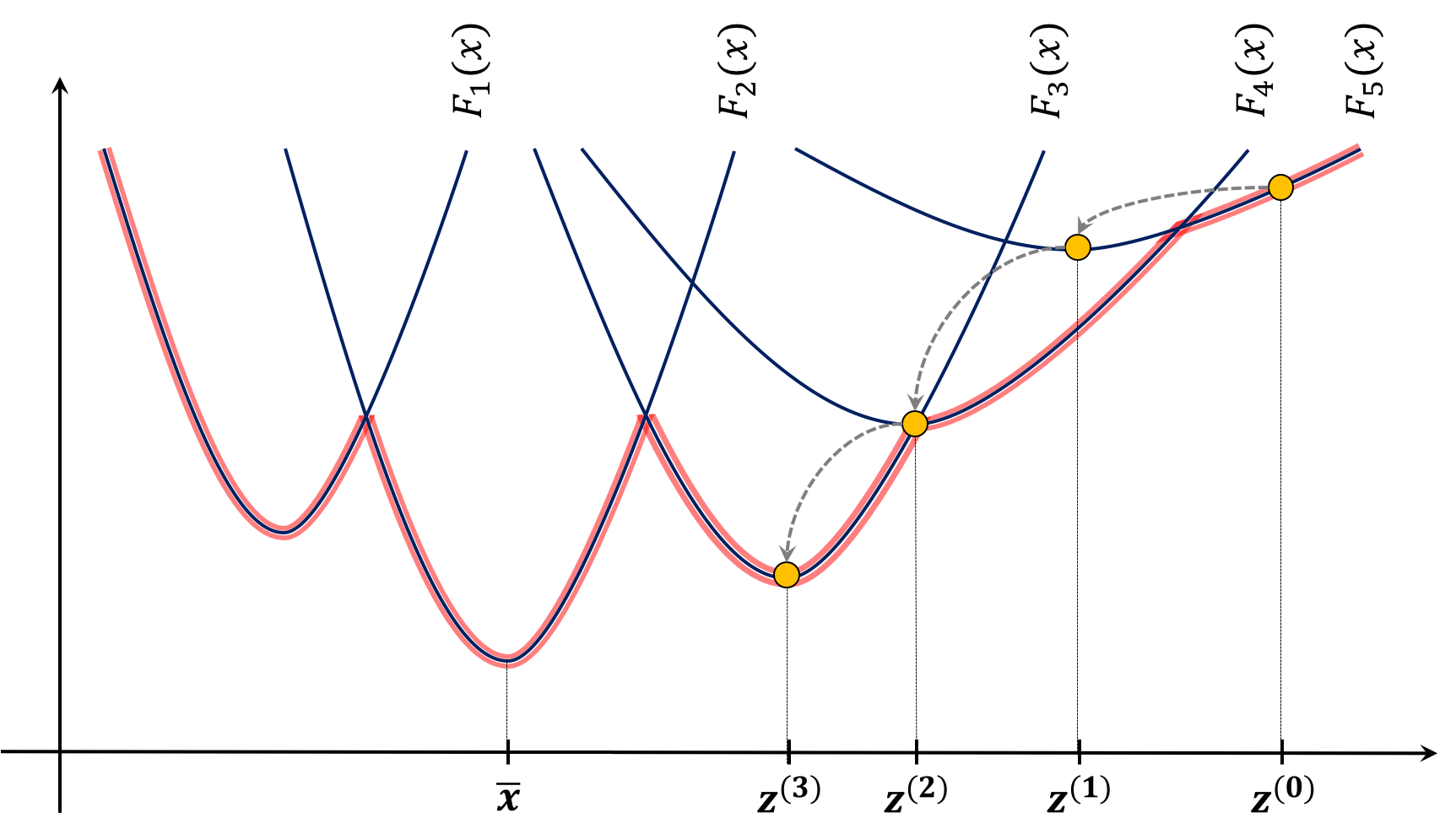}
\caption{Illustration of the MM algorithm. The algorithm starts with the initial solution $z^{(0)}$. At each iteration $t$, majorization determines a configuration $\S{C}^{(t)} \in \S{P}^N $ of optimal warping paths between the current solution $z^{(t)}$ and the sample time series $x^{(k)}$. The configuration $\S{C}^{(t)}$ determines the majorizing component function $F_{\S{C}^{(t)}}$, which is active at the current solution $z^{(t)}$. Then the minimization-step determines the unique minimum $x_*$ of the component function $F_{\S{C}^{(t)}}$. The iteration continues with $z^{(t+1)} = x_*$ as updated solution.}
\label{fig:ex_FrechetFunction_MM}
\end{figure}

\bigskip

Majorize-Minimize algorithms are descent methods by construction. Hence, if $(z^{(t)})$ is a sequence of points updated at iteration $t$ of the MM algorithm, then the sequence $(F(z^{(t)}))$ of Fr\'echet variations at the updated points $z^{(t)}$ is monotonously decreasing. The next theorem shows finite convergence of the MM algorithm to a solution satisfying necessary conditions of optimality. 
\begin{theorem}\label{theorem:convergence_of_MM}
The MM algorithm terminates after a finite number of iterations at a solution satisfying the necessary conditions of optimality (C1) and (C2).
\end{theorem}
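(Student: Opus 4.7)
The plan is to split the statement into two claims and dispatch them in turn: (i) the iteration defined by Algorithm~\ref{alg:MM} halts after finitely many steps, and (ii) the returned iterate satisfies (C1) and (C2). Two facts from the excerpt will do most of the work: $F$ is the pointwise minimum of the finitely many component functions $\cbrace{F_{\S{C}} : \S{C} \in \S{P}^N}$, and every majorize-minimize scheme is a descent method by construction.

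For (i), I would denote by $\S{C}^{(t)}$ the configuration assembled in \texttt{line 5-8} from the current iterate $z^{(t-1)}$, and by $z^{(t)}$ the iterate produced by \texttt{line 10}. The key observation is that $z^{(t)}$, being the output of the update formula, is precisely the unique minimizer of the convex differentiable component $F_{\S{C}^{(t)}}$ obtained by setting $\nabla F_{\S{C}^{(t)}} = 0$ in Eq.~\eqref{eq:prop:DF_C}; here $\sum_k V^{(k)}$ is invertible because the boundary and step conditions force every time index of $z$ to appear in every warping path, so each diagonal entry is strictly positive. Consequently the iterates for $t \geq 1$ live in the finite set $\S{Z} = \cbrace{\argmin F_{\S{C}} : \S{C} \in \S{P}^N}$. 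Combining this with the standard MM chain
\[
F(z^{(t)}) \leq F_{\S{C}^{(t)}}(z^{(t)}) \leq F_{\S{C}^{(t)}}(z^{(t-1)}) = F(z^{(t-1)})
\]
shows that $\args{F(z^{(t)})}_{t \geq 1}$ is non-increasing and valued in the finite set $\cbrace{F(z) : z \in \S{Z}}$. Since the loop continues only while this sequence strictly decreases, it must stabilize after at most $|\S{Z}|$ steps, at which point the termination criterion fires.

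For (ii), suppose the algorithm terminates with $z := z^{(t)}$ satisfying $F(z^{(t)}) = F(z^{(t-1)})$. Inserting this equality into the chain above collapses both inequalities; in particular $F(z^{(t)}) = F_{\S{C}^{(t)}}(z^{(t)})$, which is condition (C1) with the witnessing configuration $\S{C} := \S{C}^{(t)}$. Condition (C2) is then immediate from the update rule, since $z^{(t)}$ was defined as $\argsS{\sum_k V^{(k)}}{^{-1}} \args{\sum_k W^{(k)} x^{(k)}}$ with matrices read off from $\S{C}^{(t)}$, which matches Eq.~\eqref{eq:theorem:form} verbatim.

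The main obstacle, in my view, is the subtle asymmetry between the majorizing component $F_{\S{C}^{(t)}}$ and the Fr\'echet function $F$ at the new iterate: $z^{(t)}$ minimizes $F_{\S{C}^{(t)}}$, but in general $F(z^{(t)}) < F_{\S{C}^{(t)}}(z^{(t)})$ because a different configuration can become active once we move away from $z^{(t-1)}$. Proving (C1) therefore hinges on using the termination equality to squeeze the pointwise-minimum inequality $F \leq F_{\S{C}^{(t)}}$ to equality at $z^{(t)}$. A secondary point to verify in passing is that $\sum_k V^{(k)}$ is invertible at every iteration so that the update rule and the uniqueness of the minimizer of each $F_{\S{C}}$ are well-defined; this is where the boundary and step conditions on warping paths are used.
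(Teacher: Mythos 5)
Your proposal is correct and follows essentially the same route as the paper: the descent chain $F(z^{(t)}) \leq F_{\S{C}^{(t)}}(z^{(t)}) \leq F_{\S{C}^{(t)}}(z^{(t-1)}) = F(z^{(t-1)})$, the observation that termination collapses it to equality (giving (C1), with (C2) built into the update rule), and a pigeonhole on the finitely many configurations to force finite termination — the paper phrases this last step as "the configurations $\S{C}^{(t)}$ must be mutually distinct" while you phrase it as "the strictly decreasing $F$-values live in the finite set $F(\S{Z})$", but these are the same argument. Your explicit check that $\sum_k V^{(k)}$ is invertible (each diagonal entry is positive because every row index is visited by every warping path) is a detail the paper leaves implicit.
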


One way to prove Theorem \ref{theorem:convergence_of_MM} is to invoke Zangwill's convergence theorem. 
Checking the assumptions of Zangwill's convergence theorem is often as difficult as finding a direct proof \cite{Bonnans2006}. Here, we present a direct proof of Theorem \ref{theorem:convergence_of_MM}, which is a stronger convergence statement than we would obtain by invoking Zangwill's convergence theorem. Theorem \ref{theorem:convergence_of_MM} yields the termination criterion used in Algorithm \ref{alg:MM}:

\begin{corollary}
Let $(z^{(1)}, z^{(2)}, \dots)$ be a sequence of updated points generated by the MM algorithm. If $F(z^{(t)}) = F(z^{(t-1)})$ for some iteration $t$, then $z^{(t)}$ satisfies the necessary conditions of optimality (C1) and (C2) stated in Theorem \ref{theorem:form}. Moreover, if the majorization step behaves deterministic in terms of the choice of optimal warping paths, then $z^{(t)}$ cannot be improved by the MM algorithm.
\end{corollary}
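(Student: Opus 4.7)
For each iteration $t$ I would begin by writing down the standard MM descent chain
\[
F(z^{(t)}) \;\leq\; F_{\S{C}^{(t)}}(z^{(t)}) \;\leq\; F_{\S{C}^{(t)}}(z^{(t-1)}) \;=\; F(z^{(t-1)}),
\]
where $\S{C}^{(t)}$ denotes the configuration of optimal warping paths built by the majorization step at $z^{(t-1)}$. The leftmost inequality is Eq.~\eqref{eq:F_as_min}, the middle one holds because $z^{(t)}$ is defined as the unconstrained minimizer of $F_{\S{C}^{(t)}}$ in line 10 of Algorithm \ref{alg:MM}, and the final equality records that $F_{\S{C}^{(t)}} \in \S{A}_F(z^{(t-1)})$ by construction. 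The hypothesis $F(z^{(t)}) = F(z^{(t-1)})$ collapses the chain into equalities. The left equality reads $F_{\S{C}^{(t)}}(z^{(t)}) = F(z^{(t)})$, i.e.\ condition (C1) holds with $\S{C} := \S{C}^{(t)}$; and (C2) is immediate from the update formula in line 10, which is literally Eq.~\eqref{eq:theorem:form} for that same $\S{C}$.

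For the ``moreover'' clause I would next observe that every warping matrix has at least one $1$ in each row by the boundary and step conditions; hence every $V^{(k)}$ has diagonal entries $\geq 1$, the matrix $\sum_k V^{(k)}$ is strictly positive diagonal, and $F_{\S{C}^{(t)}}$ is a strictly convex quadratic with a unique minimizer. The middle equality $F_{\S{C}^{(t)}}(z^{(t)}) = F_{\S{C}^{(t)}}(z^{(t-1)})$ then forces $z^{(t-1)} = z^{(t)}$. Under the deterministic-majorization assumption, running majorization on $z^{(t)} = z^{(t-1)}$ reproduces the same configuration $\S{C}^{(t)}$, and the subsequent minimization returns the same unique minimizer $z^{(t)}$. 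Inductively $z^{(t+s)} = z^{(t)}$ for every $s \geq 0$, so the MM iteration is stationary at $z^{(t)}$ and cannot improve it.

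I expect the only mildly delicate step to be justifying strict convexity of $F_{\S{C}^{(t)}}$ rigorously enough to conclude uniqueness of its minimizer; this reduces to the combinatorial fact that a warping path $(p_1,\dots,p_L)$ with $p_1=(1,1)$, $p_L=(n,n)$ and unit-step increments projects surjectively onto $[n]$ in each coordinate. Everything else is short bookkeeping around the MM descent chain, with the deterministic hypothesis being precisely what promotes the one-step fixed-point identity $z^{(t+1)}=z^{(t)}$ to genuine non-improvability of $z^{(t)}$ under all subsequent iterations.
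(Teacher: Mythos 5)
Your proof is correct and follows essentially the same route as the paper: the descent chain $F(z^{(t)}) \leq F_{\S{C}^{(t)}}(z^{(t)}) \leq F_{\S{C}^{(t)}}(z^{(t-1)}) = F(z^{(t-1)})$ collapsing to equalities is exactly the argument embedded in the paper's proof of Theorem \ref{theorem:convergence_of_MM}, with (C2) holding by construction of the update. Your treatment of the ``moreover'' clause --- uniqueness of the minimizer of the strictly convex quadratic $F_{\S{C}^{(t)}}$ forcing $z^{(t)} = z^{(t-1)}$, and determinism then yielding a fixed point --- correctly fills in the part the paper leaves implicit.
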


We conclude this section with placing the MM algorithm into broader context. First, the MM algorithm is equivalent to the DBA algorithm proposed by Petitjean et al.~\cite{Petitjean2011}. The valence and warping matrix in \texttt{line 7} contain all the information gathered in the association tables of the DBA algorithm. The update rules of both algorithms are identical but expressed in different ways. The MM algorithm is preferred in programming languages that benefit from matrix and vector operations instead of loops. In addition, the MM algorithm is easily parallelizable in the same manner as the SG algorithm. We want to note that the descent property of DBA has been proved in \cite{Petitjean2016}. Now, in context of majorize-minimize algorithms the descent property becomes self-evident. 

Second, the mean algorithms proposed by Soheily-Khah et al.~\cite{Soheily-Khah2016} use the same majorize-minimize optimization scheme as the MM algorithm for weighted and kernelized DTW distances. Their mean algorithm for the weighted DTW distance can be directly obtained by adapting the update rule in Algorithm \ref{alg:MM} corresponding to the subgradient of the Fr\'echet function with weighted DTW distance. For the kernelized DTW distance we have a maximization problem. In this case they use minorize-maximize optimization. Once an active component function is selected in the minorization step, they perform ordinary gradient acent on the selected component function in the maximization step. Therefore, the mean algorithms in \cite{Soheily-Khah2016} generalize DBA to weighted and kernelized DTW distances.

Third, the MM algorithm is closely related to the corresponding MM algorithms for computing a mean of graphs \cite{Jain2016a} and a mean partition in consensus clustering \cite{Dimitriadou2002}. All three algorithms have in common that the underlying Fr\'echet function is a pointwise minimum of convex differentiable component functions.

Fourth, the MM algorithm can be regarded as a special variant of a subgradient algorithm. By using the per-coordinate step size
\begin{align}\label{eq:eta-matrix}
\eta = \argsS{\frac{2}{N}\sum_{k=1}^N V^{(k)}}{^{-1}},
\end{align}
the SG Algorithm \ref {alg:SG} reduces to the MM algorithm. Observe that the step size in the standard SG algorithm is a positive-valued scalar. In contrast, the step size $\eta$ given in Eq.~\eqref{eq:eta-matrix} is a diagonal matrix that provides individual step sizes for each element (coordinate) of a time series.

\subsection{The Stochastic Subgradient Mean Algorithm}

The SG algorithm computes a subgradient on the basis of the complete sample of time series. In contrast, the stochastic subgradient mean (SSG) algorithm updates the current solution on the basis of a single randomly selected time series $x^{(k)} \in \S{X}$. The $k$-th term of the Fr\'echet function is of the form
\[
F^{(k)}(x) = \dtw^2\argsS{x, x^{(k)}} = \;\min_{p \in \S{P}} \; C_p\args{x, x^{(k)}}.
\]
As an immediate consequence of Prop.~\ref{prop:DF_C}, the gradient of component function $F_p^{(k)}(x) = C_p\args{x, x^{(k)}}$ is given by
\begin{align*}
\nabla F_p^{(k)}(x) = 2 \Big(V^{(k)}x - W^{(k)}x^{(k)}\Big).
\end{align*}
Algorithm \ref{alg:SSG} outlines the basic procedure of the stochastic subgradient mean algorithm for minimizing the Fr\'echet function of a sample $\S{X} = \args{x^{(1)},\dots,x^{(N)}} \in \S{T}^N$ of time series. The individual steps of the SSG and SG algorithm are similar to a large extent, yet a few points deserve further explanation:

\smallskip
\noindent 
1.\ \emph{Online setting}: Algorithm \ref{alg:SSG} presents the SSG algorithm as an incremental version of the SG algorithm that cycles several times through a given sample of size $N$. The SSG algorithm can also be applied in an online setting, where a stationary source with limited memory generates a sequence of time series. In this case, the SSG algorithm receives the next sample time series, updates the current solution, and then discards the current sample time series. There is no updating of the best solution. We can use (T1) as termination criterion (see discussion of the SG algorithm). When viewed as a stochastic optimization problem, we refer to \cite{Ermoliev1998,Norkin1986} for convergence results. 

\smallskip
\noindent 
2.\ \emph{Iteration}: Here, the number of iterations refers to the number of update steps. The SG algorithm considers the entire sample in one iteration, whereas the SSG algorithms considers a single sample time series in one iteration. 

\smallskip
\noindent 
3.\ \emph{Update best solution}:
Recall that keeping track of the best solution $z_*$ in \texttt{line 9} evaluates the Fr\'echet function $F(z)$, which requires $N$ computations of the DTW distance. While the SG algorithm can use the resulting optimal warping paths for deriving the valence and warping matrices in the next iteration, the SSG algorithm can reuse only a single optimal warping path. Thus, one cycle through the entire sample by the SGG algorithm requires nearly twice as many DTW distance computations as the SG algorithm. To reduce the computational cost, we can either omit this step or apply it in regular intervals. 

\smallskip
\noindent 
4.\ \emph{Terminate}: The discussion about additional computational cost in item (3) carries over to termination criterion (T2). 

\begin{footnotesize}
\begin{algorithm}[t]
\caption{\footnotesize Stochastic Subgradient Method}\label{alg:SSG}
\begin{algorithmic}[1]
\footnotesize 
\Procedure{SSG}{$x^{(1)},\dots,x^{(N)}$}
\State initialize solution $z \in \S{T}$
\State initialize best solution $z_* = z$
\Repeat
\State{randomly select $k \in [N]$}
\State{compute optimal warping path $p^{(k)} \in \S{P}_*\!\args{z, x^{(k)}}$ between $z$ and $x^{(k)}$}
\State{derive valence matrix $V^{(k)}$ and warping matrix $W^{(k)}$ of $p^{(k)}$}
\State{update solution $z$ according to the rule
\[
z \gets z - \eta \args{V^{(k)} z - W^{(k)}x^{(k)}}
\]
\State{update best solution $z_*$ such that $F(z_*) = \min\cbrace{F(z_*), F(z)}$}
\State{adjust step size $\eta$}
}
\Until{termination} 
\State{\Return $z$}
\EndProcedure
\end{algorithmic}
\end{algorithm}
\end{footnotesize}

\medskip

We conclude this section by classifying the SSG algorithm into the asymmetric-symmetric and batch-incremental dimension as described in Section \ref{sec:related-work}. The SSG algorithm implements incremental averaging by following a similar scheme as the symmetric-incremental method NLAAF2 \cite{Gupta1996}. In contrast to NLAAF2, the type of average in SGG is asymmetric. Thus, SSG is the first proposed mean algorithm belonging to the class of asymmetric-incremental methods.

\section{Experiments}\label{sec:experiments}

The goal of this section is to assess the performance of the SSG algorithm in comparison with the MM (DBA) algorithm.

\subsection{General Performance Comparison}\label{sec:exp01}

The first series of experiments compares the performance of different variants of SSG and MM on selected UCR benchmark datasets.

\subsubsection{Data} 
In this experiment, the $24$ datasets of the UCR Time Series Classification Archive \cite{Chen2015} shown in Table \ref{tab:data} were selected. Time series of the same dataset have identical length. The training and test set of the original datasets were merged to a single set. 
 
\subsubsection{Algorithms}
The following five variants of the SSG and MM algorithm were considered:
\begin{center}
\begin{tabular}{l@{\quad}l@{\quad}r}
\hline
Notation & Algorithm & Epochs\\
\hline
\\[-2ex]
SSG-1 & stochastic subgradient mean algorithm & 1\\
SSG-e & stochastic subgradient mean algorithm & $e \in [50]$\\
SSG-50 & stochastic subgradient mean algorithm & 50\\
MM-1 & majorize-minimize mean algorithm & 1\\
MM-50 & majorize-minimize mean algorithm & 50\\
\hline
\end{tabular}
\end{center}
One epoch is a cycle through the whole dataset. SSG-1 and MM-1 terminate after the first epoch and SSG-50 after $50$ epochs. MM-50 terminates as described in Algorithm \ref{alg:MM}, but at the latest after 50 epochs. Finally, SSG-e terminates after the same number of epochs as MM-50.

The step size of the SSG algorithms were adjusted according to the following schedule: 
\begin{align*}
\eta^{(t)} = \begin{cases} 
\displaystyle \eta^{(t-1)} - \args{\eta_0 - \eta_1}/N & 1 \leq t \leq N \\[3ex]
\displaystyle \eta_1 & t > N
\end{cases},
\end{align*}
where $\eta_0 = \eta^{(0)} = 0.05$ is the initial step size, $\eta_1 = 0.005$ is the final step size, $t$ is the number of iterations, and $N$ is the sample size. This schedule linearly decreases the step size from $\eta_0$ to $\eta_1$ during the first epoch and then remains constant at $\eta_1$ until termination. 

The solution quality of an algorithm is measured by means of Fr\'echet variation of the found solution. We denote the variation of algorithm $A$ at epoch $e$ by 
\[
V_A(e) = F\args{z^{(e)}_*},
\]
where $F$ is the Fr\'echet function and $z^{(e)}_*$ is the best solution found so far by algorithm $A$.

\begin{table}[t]
\footnotesize
\centering
\caption{UCR datasets and their characteristics. Shown are the name of the dataset, the length $n$ of the time series, the number $C$ of classes, and the sample size $N$.} 
\begin{tabular}{l || r | r | r}
\hline
\hline
Dataset 				& \multicolumn{1}{|c|}{$n$} & \multicolumn{1}{|c|}{$C$}& \multicolumn{1}{|c}{$N$}\\
\hline
50words 				& 270 & 50 & 905 \\ 
Adiac 				& 176 & 37 & 781 \\ 
Beef 				& 470 & 5 & 60 \\ 
CBF 					& 128 & 	 3 & 930 \\ 
ChlorineConcentration	& 166 & 3 & 4307 \\
Coffee 				& 286 & 2 & 56 \\ 
ECG200 				& 96 & 2 & 200 \\ 
ECG5000				& 140 & 	 5 &	 5000 \\
ElectricDevices 		& 96 & 7 & 16637 \\
FaceAll 				& 131 & 	14 & 2250 \\ 
FaceFour 				& 350 & 4 & 112 \\ 
FISH 				& 463 & 7 & 350 \\ 
\hline
\hline
\end{tabular}
\hspace{1em}
\begin{tabular}{l || r | r | r}
\hline
\hline
Dataset 				& \multicolumn{1}{|c|}{$n$} & \multicolumn{1}{|c|}{$C$}& \multicolumn{1}{|c}{$N$}\\
\hline
Gun Point 			& 150 & 	 2 & 200 \\ 
Lighting2 				& 637 & 	 2 & 121 \\ 
Lighting7 				& 319 & 	 7 & 	 143 \\ 
OliveOil 				& 570 & 	 4 & 60 \\ 
OSULeaf 				& 427 & 	 6 & 442 \\ 
PhalangesOutlinesCorrect& 80 & 2 & 2658 \\
SwedishLeaf 			& 128 & 	15 & 1125 \\ 
synthetic control 		& 60 &	 6 & 600 \\ 
Trace 				& 275 & 	 4 & 200 \\ 
Two Patterns 			& 128 & 	 4 & 5000 \\ 
wafer 				& 152 & 	 2 & 7164 \\ 
yoga					& 426 &	 2 &	 3300 \\
\hline
\hline
\end{tabular}
\label{tab:data}
\end{table}

\subsubsection{Experimental Protocol}

Given a dataset, an experiment was conducted according to the following procedure:
\begin{small}
\begin{enumerate}
\itemsep-0.2em
\item \textbf{given:} dataset $\S{X}$
\item \textbf{for} $30$ trials \textbf{do}
\item \qquad randomly select initial solution $z \in \S{X}$
\item \qquad run SSG for 50 epochs with initial solution $z$
\item \qquad record variation $V_{SSG}(e)$ for $e \in \cbrace{1, \dots, 50}$
\item \qquad run MM for 50 epochs with initial solution $z$
\item \qquad record variation $V_{MM}(e)$ for $e \in \cbrace{1, \dots, 50}$
\item \textbf{end for}
\end{enumerate}
\end{small}
Thus, in total, $720$ trials (24 datasets $\times$ 30 trials) were conducted. 

\subsubsection{Results}

Table \ref{tab:results} summarizes the results obtained by the five mean algorithms. The results show the average variations over the $30$ trials and their standard deviations. The best average variations vary from $0.03$ for OliveOil to $78.05$ for Lighting2 both obtained by SSG-50. These results suggest that the selected datasets cover a broad range of variability in the data. 

\begin{table}[t]
\scriptsize
\centering
\caption{Average variation (lower is better) and standard deviation over $30$ trials.}
\scriptsize
\begin{tabular}{l || r@{\:}r | r@{\:}r | r@{\:}r | r@{\:}r | r@{\:}r} 
\hline
\hline
Dataset & \multicolumn{2}{c|}{SSG-1} & \multicolumn{2}{c|}{SSG-e} & \multicolumn{2}{c|}{SSG-50} & \multicolumn{2}{c|}{MM-1} & \multicolumn{2}{c}{MM-50} \\
\hline 
&&&&&&&&\\[-2ex]
50words & $18.29$&$^{(\pm0.57)}$ & $17.71$&$^{(\pm0.53)}$ & $17.56$&$^{(\pm0.48)}$ & $34.67$&$^{(\pm5.28)}$ & $18.68$&$^{(\pm1.93)}$ \\
Adiac & $0.55$&$^{(\pm0.01)}$ & $0.53$&$^{(\pm0.00)}$ & $0.52$&$^{(\pm0.00)}$ & $0.8$&$^{(\pm0.05)}$ & $0.54$&$^{(\pm0.01)}$ \\
Beef & $27.48$&$^{(\pm4.51)}$ & $19.95$&$^{(\pm2.41)}$ & $15.9$&$^{(\pm1.16)}$ & $38.96$&$^{(\pm6.68)}$ & $16.86$&$^{(\pm2.24)}$ \\
CBF & $18.35$&$^{(\pm0.34)}$ & $18.06$&$^{(\pm0.30)}$ & $18.01$&$^{(\pm0.31)}$ & $23.68$&$^{(\pm1.67)}$ & $18.4$&$^{(\pm0.35)}$ \\
ChlorineConcentration & $14.79$&$^{(\pm0.35)}$ & $14.71$&$^{(\pm0.36)}$ & $14.71$&$^{(\pm0.35)}$ & $17.64$&$^{(\pm1.13)}$ & $15.13$&$^{(\pm0.52)}$ \\
Coffee & $0.77$&$^{(\pm0.03)}$ & $0.7$&$^{(\pm0.02)}$ & $0.67$&$^{(\pm0.01)}$ & $0.91$&$^{(\pm0.08)}$ & $0.68$&$^{(\pm0.01)}$ \\
ECG200 & $7.15$&$^{(\pm0.49)}$ & $7.15$&$^{(\pm0.49)}$ & $6.76$&$^{(\pm0.40)}$ & $8.81$&$^{(\pm0.78)}$ & $6.95$&$^{(\pm0.43)}$ \\
ECG5000 & $17.91$&$^{(\pm0.99)}$ & $17.79$&$^{(\pm0.97)}$ & $17.76$&$^{(\pm0.97)}$ & $26.92$&$^{(\pm4.22)}$ & $19.21$&$^{(\pm1.58)}$ \\
ElectricDevices & $42.74$&$^{(\pm0.27)}$ & $42.54$&$^{(\pm0.22)}$ & $42.48$&$^{(\pm0.22)}$ & $56.84$&$^{(\pm4.99)}$ & $43.54$&$^{(\pm0.49)}$ \\
FaceAll & $27.6$&$^{(\pm0.39)}$ & $27.44$&$^{(\pm0.36)}$ & $27.4$&$^{(\pm0.34)}$ & $33.11$&$^{(\pm1.65)}$ & $28.25$&$^{(\pm0.76)}$ \\
FaceFour & $37.69$&$^{(\pm1.68)}$ & $35.83$&$^{(\pm1.37)}$ & $35.18$&$^{(\pm1.37)}$ & $48.54$&$^{(\pm4.87)}$ & $36.43$&$^{(\pm1.53)}$ \\
FISH & $1.33$&$^{(\pm0.03)}$ & $1.29$&$^{(\pm0.03)}$ & $1.29$&$^{(\pm0.02)}$ & $1.67$&$^{(\pm0.16)}$ & $1.33$&$^{(\pm0.08)}$ \\
Gun\_Point & $2.72$&$^{(\pm0.34)}$ & $2.63$&$^{(\pm0.36)}$ & $2.41$&$^{(\pm0.29)}$ & $5.99$&$^{(\pm1.11)}$ & $2.4$&$^{(\pm0.20)}$ \\
Lighting2 & $88.17$&$^{(\pm3.71)}$ & $82.34$&$^{(\pm2.28)}$ & $78.05$&$^{(\pm1.45)}$ & $116.92$&$^{(\pm9.30)}$ & $79.95$&$^{(\pm2.41)}$ \\
Lighting7 & $52.73$&$^{(\pm1.97)}$ & $48.88$&$^{(\pm1.00)}$ & $48.59$&$^{(\pm0.94)}$ & $70.32$&$^{(\pm4.50)}$ & $49.51$&$^{(\pm1.20)}$ \\
OliveOil & $0.03$&$^{(\pm0.00)}$ & $0.03$&$^{(\pm0.00)}$ & $0.03$&$^{(\pm0.00)}$ & $0.03$&$^{(\pm0.00)}$ & $0.03$&$^{(\pm0.00)}$ \\
OSULeaf & $29.15$&$^{(\pm0.59)}$ & $29.15$&$^{(\pm0.59)}$ & $27.68$&$^{(\pm0.39)}$ & $51.72$&$^{(\pm8.67)}$ & $28.41$&$^{(\pm0.71)}$ \\
PhalangesOutlinesCorrect & $1.17$&$^{(\pm0.01)}$ & $1.16$&$^{(\pm0.01)}$ & $1.16$&$^{(\pm0.01)}$ & $1.48$&$^{(\pm0.10)}$ & $1.17$&$^{(\pm0.01)}$ \\
SwedishLeaf & $4.08$&$^{(\pm0.09)}$ & $4.03$&$^{(\pm0.09)}$ & $4.01$&$^{(\pm0.08)}$ & $5.1$&$^{(\pm1.08)}$ & $4.1$&$^{(\pm0.15)}$ \\
synthetic\_control & $21.79$&$^{(\pm0.19)}$ & $21.53$&$^{(\pm0.16)}$ & $21.31$&$^{(\pm0.16)}$ & $28.58$&$^{(\pm2.42)}$ & $21.72$&$^{(\pm0.16)}$ \\
Trace & $35.69$&$^{(\pm18.1)}$ & $28.75$&$^{(\pm19.7)}$ & $28.4$&$^{(\pm19.8)}$ & $72.47$&$^{(\pm33.6)}$ & $22.67$&$^{(\pm6.76)}$ \\
Two\_Patterns & $13.69$&$^{(\pm1.46)}$ & $13.56$&$^{(\pm1.41)}$ & $13.55$&$^{(\pm1.41)}$ & $26.5$&$^{(\pm2.41)}$ & $17.67$&$^{(\pm2.64)}$ \\
wafer & $24.41$&$^{(\pm1.93)}$ & $23.71$&$^{(\pm1.97)}$ & $23.54$&$^{(\pm1.62)}$ & $55.56$&$^{(\pm6.33)}$ & $29.03$&$^{(\pm3.89)}$ \\
yoga & $11.93$&$^{(\pm0.36)}$ & $11.7$&$^{(\pm0.31)}$ & $11.63$&$^{(\pm0.32)}$ & $29.28$&$^{(\pm6.34)}$ & $12.64$&$^{(\pm1.41)}$ \\
\hline
\hline
\end{tabular}
\label{tab:results}
\end{table}

\begin{figure}[t]
\centering
\includegraphics[width=0.49\textwidth]{./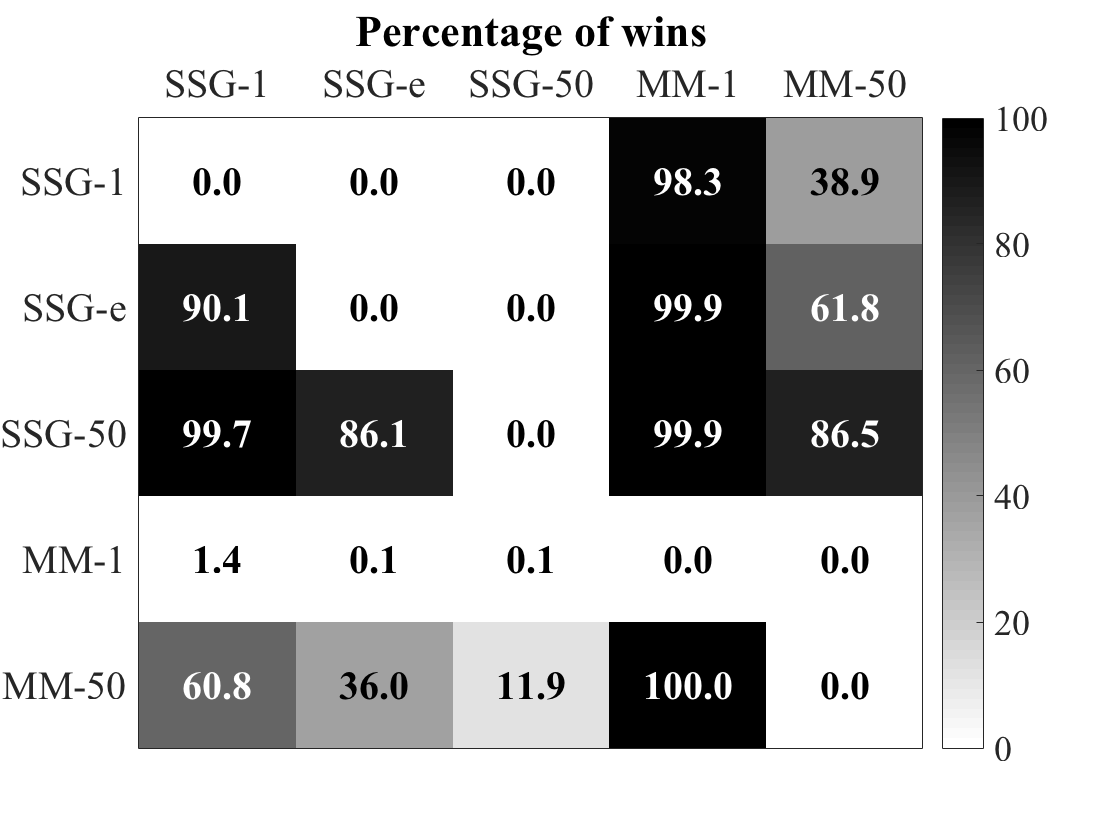}
\includegraphics[width=0.49\textwidth]{./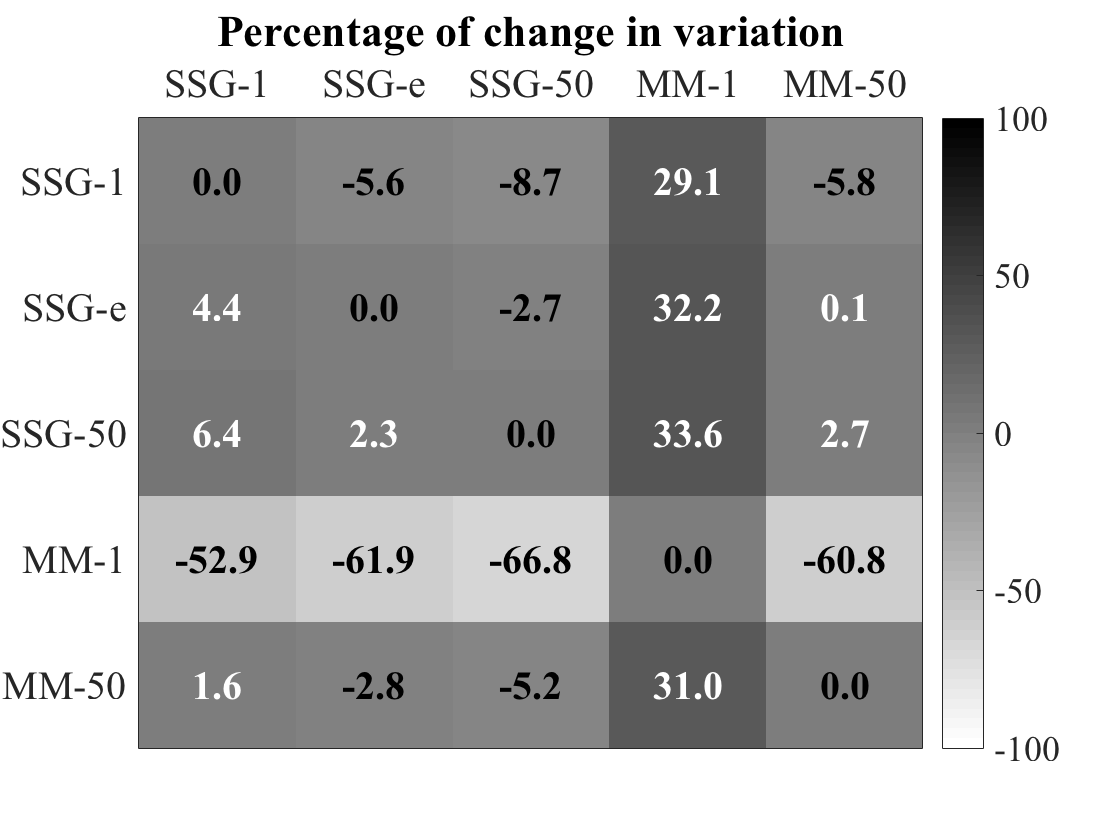}
\caption{Percentage of wins (left) (row against column) and average percentage of change (right) over $720$ trials.}
\label{fig:rnk}
\end{figure}

We compare the solution quality of the five mean algorithms. The left plot of Figure \ref{fig:rnk} shows the matrix $\args{p_{ij}^w}$ of percentage of wins. The percentage $p_{ij}^w$ of wins is the fraction of trials in which the algorithm in row $i$ found a solution with lower variation than the algorithm in column $j$. Note that 
\[
p_{ij}^{eq} = 100-p_{ij}^w-p_{ji}^w
\]
is the percentage of trials for which algorithms $i$ and $j$ found solutions with identical variation. The right plot of Figure \ref{fig:rnk} shows the average percentage $p_{ij}^{\,v}$ of change in variation over all $720$ trials. The percentage $p_{ij}^v$ of change in variation of a single trial is defined by 
\[
p_{ij}^v = 100\cdot \frac{V_j-V_i}{V_j},
\]
 where $V_i$ and $V_j$ are the variations of the solutions found by algorithms in row $i$ and column $j$, respectively. Positive (negative) values $p_{ij}^v$ mean that the algorithm in row $i$ won (lost) against the reference algorithm in column $j$. We made the following observations:

\paragraph*{1.} SSG-1 won in $98.3 \%$ of all trials against MM-1 with an average improvement of $29.1 \%$. This result shows that SSG-1 substantially outperformed MM-1 and suggests to prefer SSG over MM in scenarios, where computation time matters. 

\paragraph*{2.} After one iteration SSG-1 deviates merely $8.7 \%$ from the best solution found by SSG-50. In contrast MM-1 deviates $60.8 \%$ from the best solution found by MM-50. This indicates that SSG converges substantially faster than MM. We quantify this observation further in Section \ref{sss:results_on_runtime}.

\paragraph*{3.}
 SSG-e won in $61.8 \%$ of all trials against MM-50 with an average improvement of $0.1 \%$. 
 Given the same amount of computation time defined by termination of MM, then SSG and MM perform comparable, though the results by SSG are slightly better than those obtained by MM.
 
\paragraph*{4.} 
SSG-50 won in $86.5 \%$ of all trials against MM-50 with an average improvement of $2.7 \%$. This result indicates that the solutions found by SSG-50 further improved after termination of MM-50. 
Consequently, in situations where computation time is not an issue, SSG is more likely to return better solutions than MM.

\paragraph*{5.} 
One limitation of SSG is that its performance depends on the schedule with which the step size parameter is adapted. Finding an optimal schedule can take much longer than a run of DBA. To mitigate this limitation, we proposed a default schedule and applied it to all experiments. On the positive side, optimizing the step size schedule in a problem-dependent manner gives room for further improvement of the SSG algorithm. This advantage can be exploited in scenarios, where computation time is not so much a critical issue. 

\medskip
 
In summary, on average SSG finds solutions of similar quality faster than MM and is therefore better suited in situations, where sample size is large or many sample means need to be computed, such as in k-means for large datasets.

\subsection{Comparison of Performance as a Function of the Sample Size}

The goal of the second series of experiments is to assess the variation of the MM and SSG algorithms as a function of the sample size.

\subsubsection{Experimental Setup} 
The four largest datasets of the $24$ UCR datasets shown in Table \ref{tab:data} were used: ECG5000, ElectricDevices, Two\_Patterns, and wafer. 

\medskip

Given a dataset $\S{X}$ and sample sizes $\S{N} = \{N_1 , \dots , N_r\}$, $N_i \leq \abs{\S{X}}$, which can be taken from Figure \ref{fig:ss}, an experiment was conducted as follows:
\begin{small}
\begin{enumerate}
\itemsep-0.2em
\item \textbf{given:} dataset $\S{X}$, sample sizes $\S{N}$
\item \textbf{for} $30$ trials \textbf{do} 
\item \qquad \textbf{for} $N \in \S{N}$ \textbf{do}
\item \qquad\qquad randomly draw a subsample $\S{X}_N$ of $\S{X}$ of size $N$.
\item \qquad\qquad randomly select initial solution $z \in \S{X}_N$
\item \qquad\qquad run SSG on $\S{X}_N$ for 50 epochs with initial solution $z$
\item \qquad\qquad run MM on $\S{X}_N$ for 50 epochs with initial solution $z$
\item \qquad\qquad record variations of MM and SSG after each epoch
\item \qquad\textbf{end for}
\item \textbf{end for}
\end{enumerate}
\end{small}
The step size schedule of SSG was chosen as in Section \ref{sec:exp01}.

\begin{figure}[h!]
\centering
\includegraphics[width=0.9\textwidth]{./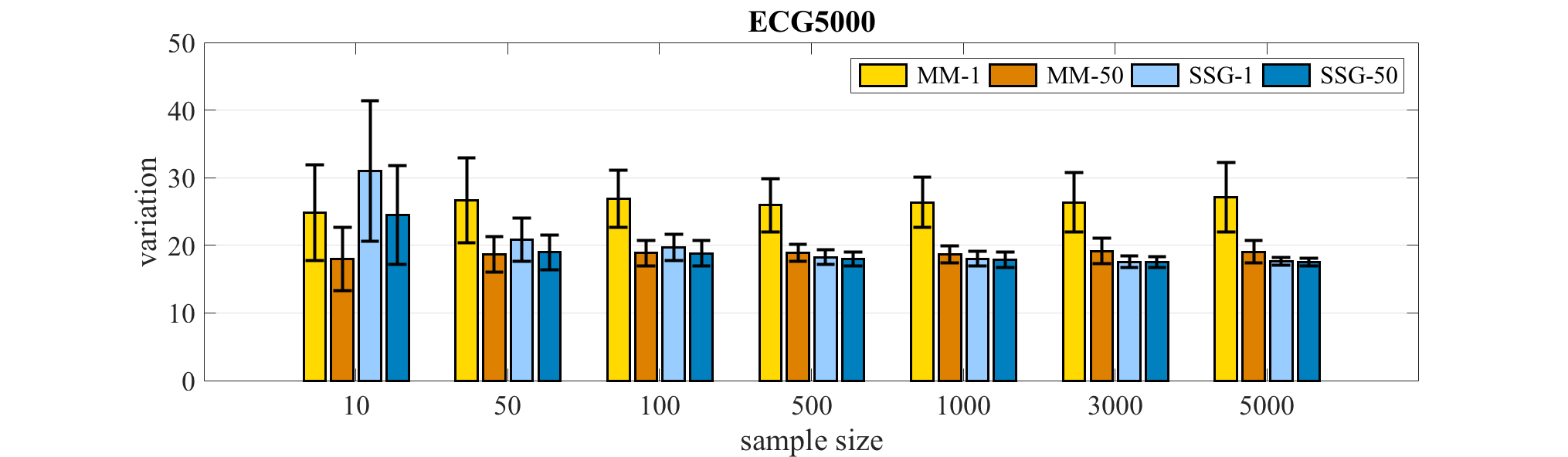}
\includegraphics[width=0.9\textwidth]{./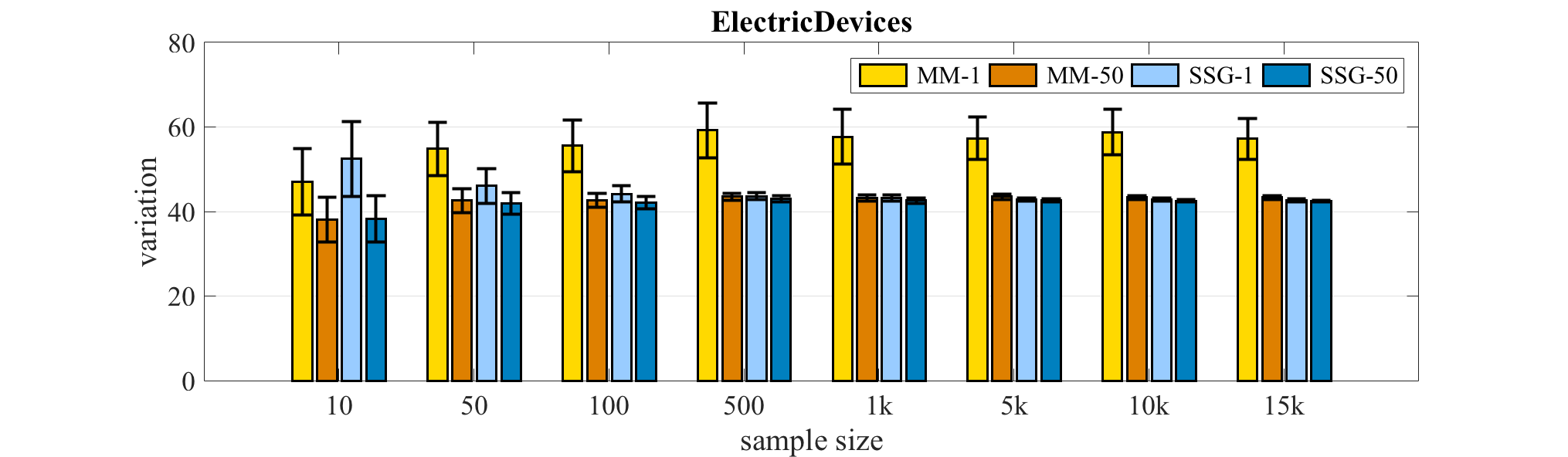}
\includegraphics[width=0.9\textwidth]{./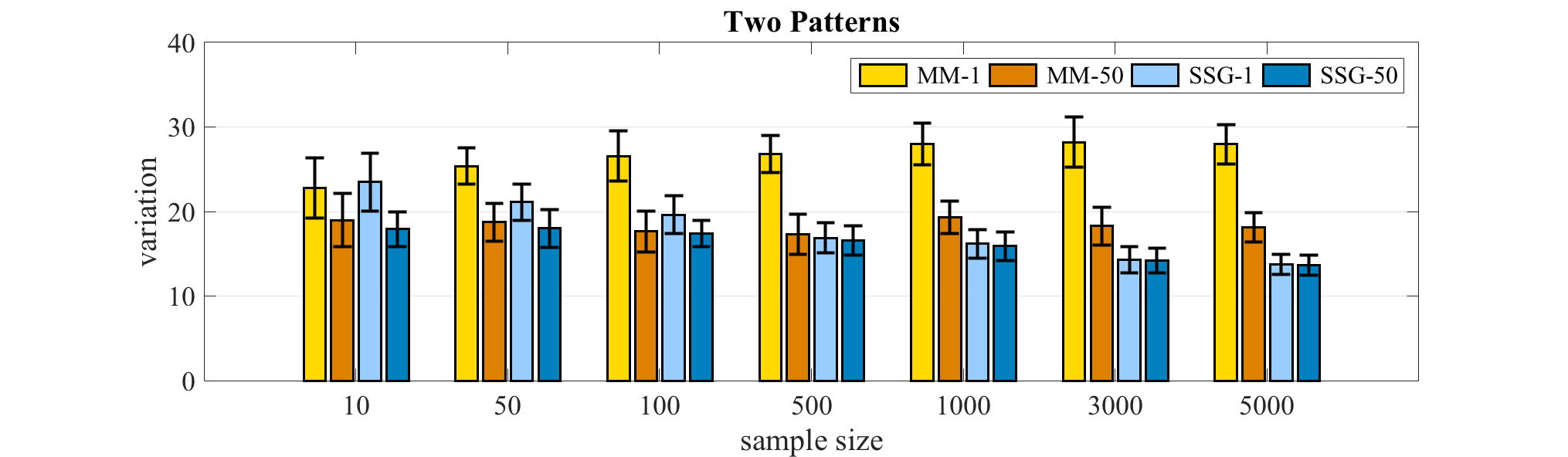}
\includegraphics[width=0.9\textwidth]{./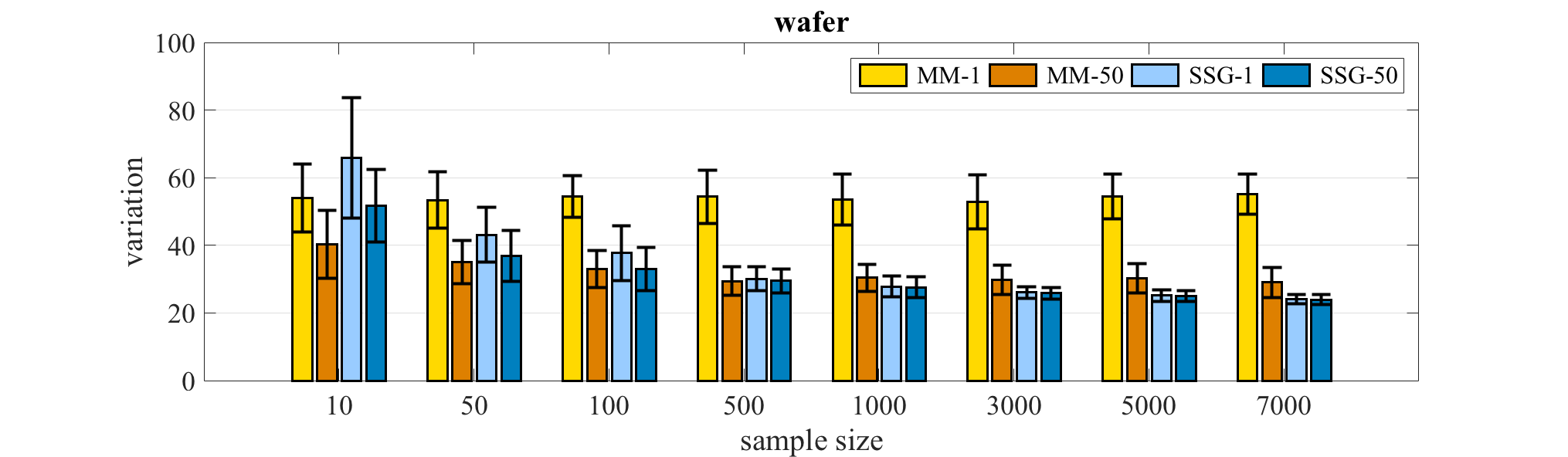}
\caption{Average variation (lower is better) and standard deviation of MM-1, MM-50, SSG-1 and SSG-50.}
\label{fig:ss}
\end{figure}

\subsubsection{Results on Variation}\label{sss:results_on_variation}

Figure \ref{fig:ss} shows the average variations and their standard deviations of the mean algorithms as a function of the sample size $N$. 
We made the following observations:

\paragraph*{1.} For large sample sizes ($N \geq 1000$), SSG-1 finds better solutions than MM-50 on average. Hence, for large sample sizes SSG finds better solutions in less time than MM on average.

\paragraph*{2.} The difference of average variations of SSG-50 and SSG-1 decreases with increasing sample size. For sample sizes $N \geq 500$ the solution qualities of SSG-1 and SSG-50 are comparable. This indicates that for sufficiently large sample sizes SSG may converge within the first epoch.

\paragraph*{3.} For increasing sample sizes, the difference of average variations of MM-1 and SSG-1 increases. Thus, increasing sample sizes have a positive influence on SSG-1 compared to MM-1. This indicates that the number of updates matters for solution quality. However, MM needs to cycle through the whole sample before making an update, whereas the number of updates made by SSG-1 increases with increasing sample size.

\paragraph*{4.} For all but the smallest sample sizes SSG is more likely to find better solutions in shorter time than MM. For small sample sizes, MM finds better solutions in shorter time than SSG on average. These observations refine the findings of Section \ref{sec:exp01}.

\paragraph*{5.} The standard deviations of SSG-1 and SSG-50 decrease faster with increasing sample size than the standard deviations of MM-1 and MM-50. This result shows that initialization of MM has a stronger effect on solution quality than the order with which SSG processes the sample time series. This finding disagrees with prior assumptions that the order of pairwise averaging causes inferior and unstable performance \cite{Petitjean2011}.

\medskip

In summary, for all but the smallest sample sizes, SSG finds better solutions than MM on average. For large sample sizes, performing a single epoch of SSG may be sufficient to find better and more robust solutions than MM-50. Moreover, for sufficiently large sample sizes SSG may converge within some small tolerance to a solution before it has processed a single epoch.

\begin{figure}[t]
\centering
\includegraphics[width=0.45\textwidth]{./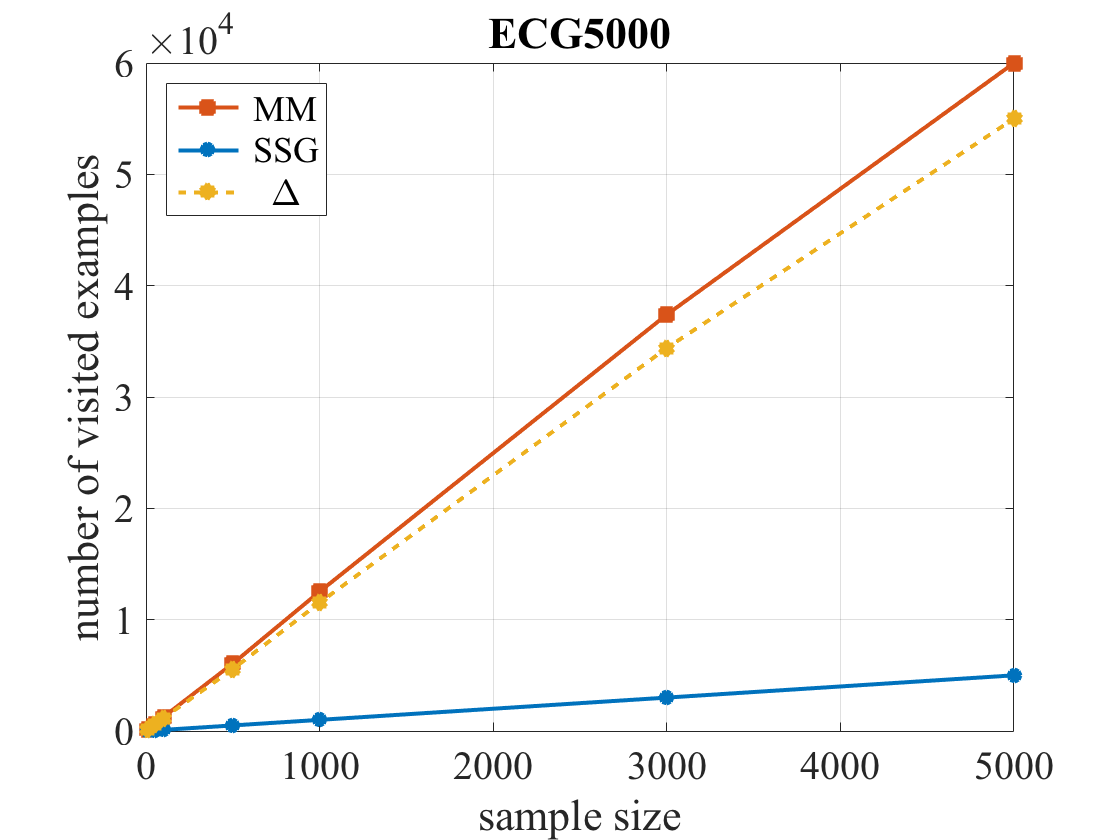}
\includegraphics[width=0.45\textwidth]{./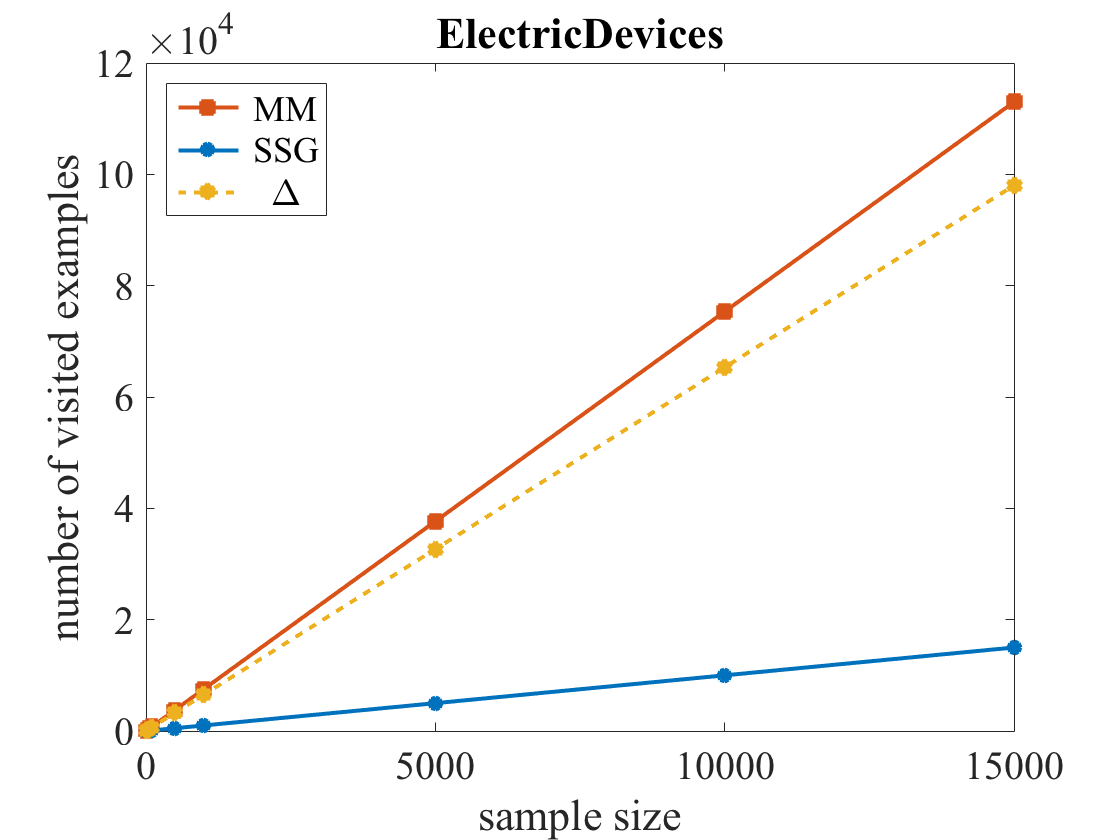}

\vspace{1em}

\includegraphics[width=0.45\textwidth]{./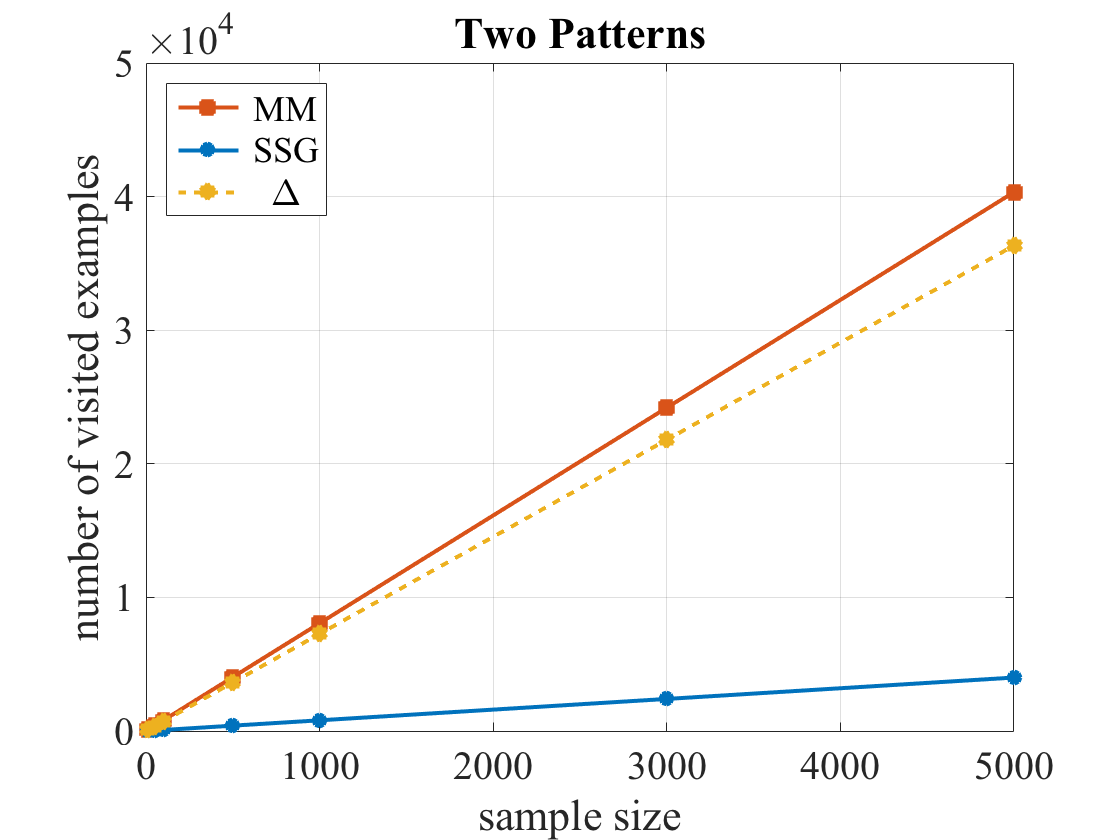}
\includegraphics[width=0.45\textwidth]{./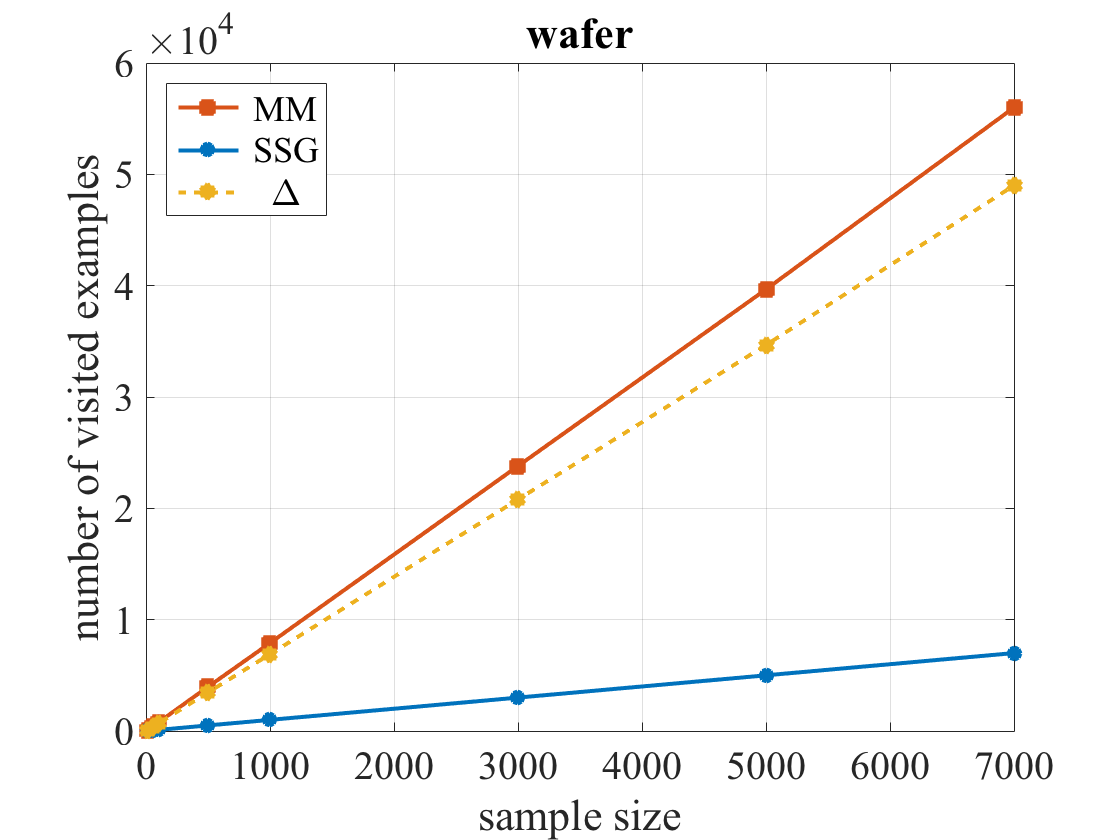}
\caption{Average number of visited examples as a function of the sample size. The considered sample sizes can be inferred by the filled balls on the lines and correspond exactly to those shown in Figure \ref{fig:ss}. Due to the scaling of the x-axis, balls of small sample sizes overlap. MM refers to MM-50, SSG refers to SSG-e$'$, where $e'$ is the number of epochs required by SSG to obtain at least the same solution quality as MM-50. Finally, $\Delta$ is the average difference of the number of visited examples between MM and SSG.}
\label{fig:sst}
\end{figure}

\subsubsection{Results on Runtime}\label{sss:results_on_runtime}

Next, we are interested in the runtimes of SSG and MM required to achieve a solution of approximately the same quality. The runtimes of SSG and MM are both $O(e N n^2)$, where $e$ is the number of epochs until termination, $N$ is the sample size, and $n$ is the length of the time series. The quadratic factor $n^2$ is caused by computing the DTW distance between a sample time series and the current solution. 
Since complexity of both algorithms is identical, it is sufficient to measure the runtimes by counting the number of visited (processed) sample time series.

As solutions of approximately the same quality, we use the variations obtained by MM-50 as reference values. The number of visited examples by MM-50 until termination at epoch $e$ is given by $eN$. Similarly, the number of visited examples by SSG-50 is $e'N$, where $e'$ is the minimum number of epochs required by SSG-50 to achieve a solution of at least the same quality as MM-50. We refer to this variant of SSG as SSG-e$'$. To ensure solutions of approximately the same quality, we excluded all trials ($\sim 5.5\%$) for which SSG-50 was unable to find a solution of at least the same quality as MM-50. 

\medskip

Figure \ref{fig:sst} summarizes the results. The results show that SSG-e$'$ found solutions of at least the same quality five up to ten times faster than MM-50. In addition, we observe that the runtimes of both algorithms increase linearly with the sample size, but with different speed. Linearity implies that the number of epochs is independent of the sample size, given that the samples are randomly drawn from the same distribution. This result confirms the observation from Section \ref{sss:results_on_variation} that many updates based on single examples lead to faster convergence to a solution of similar quality than a few updates based on the entire sample. The important advantage of SSG over MM is that computation time per update does not grow with the sample size. These findings are largely in line with those reported in the machine learning and deep learning literature on batch and stochastic gradient descent \cite{Bengio2013}. 

\subsection{Why Pairwise Averaging has Failed and how it can Succeed}

The empirical and theoretical results provide new insight on incremental (pairwise averaging) methods for the sample mean problem in DTW spaces. Empirically, we have two results:
\begin{enumerate}
\item 
The symmetric-incremental methods NLAAF \cite{Gupta1996} and PSA \cite{Niennattrakul2009} performed inferior than the asymmetric-batch MM algorithm \cite{Petitjean2011,Soheily-Khah2015}. 
\item 
For non-small sample sizes, the asymmetric-incremental SSG algorithm performed superior than the asymmetric-batch MM algorithm.
\end{enumerate} 
The first empirical result swept away incremental methods and replaced them by research on batch methods. The second empirical result disagrees with the assumption that pairwise averaging is the cause of inferior and unstable performance compared to batch methods such as the MM algorithm \cite{Petitjean2011}. Instead, we assume that two other factors cause the inferior performance of the incremental methods NLAAF and PSA:

\begin{enumerate}
\item 
Recall from Section \ref{sec:related-work} that symmetric-incremental methods result in increasingly long average time series. To adjust the length of the average time series, symmetric-incremental methods usually apply a post-processing step. This post-processing step ignores the necessary conditions of optimality. Thus it could happen that the post-processing step displaces the average far from a local minimum.

\item
The way the step size is adapted could matter. 
\end{enumerate} 

These considerations suggest to empirically reassess iterative methods such as PSA in a modified form: (i) replace symmetric averages by asymmetric averages; (ii) eventually consider a suitable schedule for adapting the step size; and (iii) optionally apply the MM algorithm after termination of the modified iterative method. 

Note that the third option can be applied to any asymmetric-iterative method including the SSG algorithm. The role of the asymmetric-iterative method can be interpreted as a sophistic way to initialize MM. Conversely, the role of the MM algorithm can be interpreted as a post-processing step to ensure the necessary conditions of optimality. 

\section{Conclusion}\label{sec:conclusion}

This article provides a theoretical foundation for the sample mean problem in DTW spaces. The main contributions can be summarized as
(1) novel directions for devising improved sample mean algorithms, (2) improved understanding of the sample mean problem, (3) improved understanding of existing mean algorithms.

Novel directions of mean algorithms are nonsmooth optimization methods and asymmetric-incremental methods. As a representative of nonsmooth optimization methods, we presented subgradient methods. As a representative of both, nonsmooth and asymmetric-incremental methods, we propose a stochastic subgradient (SSG) method. Experiments show that for increasing sample sizes the SSG algorithm is more stable and finds better solutions in shorter time than the DBA algorithm on average. Therefore, SSG is particularly useful if the sample size is non-small and in situations where several sample means have to be estimated several times such as in k-means clustering of large-scale temporal datasets. In addition, as a stochastic method, SSG can be applied in online settings, where merely one sample time series is available at a time.

Concerning contribution (2), we presented a mathematically sound way to explore analytical properties of the Fr\'echet function. This work answers the remaining of the four fundamental questions of an optimization problem (existence and uniqueness of a solution, formulation of necessary and sufficient conditions of optimality). In particular, we know the form of local minimizers and of a sample mean.

Regarding contribution (3), we showed that the DBA algorithm is a majorize-minimize algorithm and a special case of a subgradient method. In conclusion, we have a mathematically convenient formulation of DBA that simplifies analysis of the algorithm and directly leads to better suited implementations for programming languages that benefit from matrix calculus instead of loops. We used the reformulation of DBA to prove convergence to solutions satisfying the necessary conditions of optimality after a finite number of iterations and obtained a new termination criterion for DBA. Further, we showed that Soheily-Khah et al. \cite{Soheily-Khah2016} generalized DBA to weighted and kernelized DTW distances.

The theoretical and empirical results justify to further explore the two proposed directions -- nonsmooth optimization and asymmetric-incremental methods -- for devising sample mean algorithms. Methods belonging to the first direction include sophisticated nonsmooth optimization methods such as plane cutting, bundle, and gradient sampling techniques. The second direction includes asymmetric variants of the PSA and NLAAF algorithms that were originally introduced as symmetric-incremental methods.

\medskip

\paragraph*{Acknowledgements:} B.~Jain was funded by the DFG Sachbeihilfe \texttt{JA 2109/4-1}.

\begin{small}

\appendix

\newtheorem*{theorem*}{Theorem}
\newtheorem*{proposition*}{Proposition}

\section{Theoretical Background and Proofs}

This section provides the background and the proofs of the theoretical results stated in this article.

\subsection{Time Warping Embeddings}\label{ss:embeddings}

Every warping path $p \in \S{P}$ of length $L$ induces two embeddings $\Ex, \Ey:\S{T} \rightarrow \R^L$ such that the cost of aligning two time series $x$ and $y$ along warping path $p$ can be expressed as 
\begin{align}\label{eq:C_p = E}
C_p(x, y) = \normS{\Ex(x)-\Ey(y)}{^2}.
\end{align}
To construct both embeddings, we assume that $e^{k} \in \R^n$ denotes the $k$-th standard basis vector with elements
\[
e_i^k = \begin{cases}
1 & i = k\\
0 & i \neq k
\end{cases}.
\] 
\begin{definition}\label{def:embeddings}
Let $p = (p_1, \dots, p_L) \in \S{P}$ be a warping path with points $p_l = (i_l, j_l)$. Then 
\begin{equation*}
\Ex = 
\begin{bmatrix} 
e^{i_1} \\
\vdots \\
e^{i_L} \\
\end{bmatrix} \in \R^{L \times n}, \quad
\Ey = 
\begin{bmatrix} 
e^{j_1} \\
\vdots \\
e^{j_L} \\
\end{bmatrix}
 \in \R^{L \times n},
\end{equation*}
is the pair of \emph{embedding matrices} induced by warping path $p$. 
\end{definition}
The embedding matrices have full column rank $n$ due to the boundary and step condition of the warping path. Thus, we can regard the embedding matrices of warping path $p$ as injective linear maps $\Ex, \Ey:\R^n \rightarrow \R^L$ that embed time series $x$ and $y$ of length $n$ into $\R^L$ by matrix multiplication $\Ex x$ and $\Ey y$. We show that Eq.~\eqref{eq:C_p = E} holds. 

\begin{proposition}\label{prop:C=norm}
Let $\Ex$ and $\Ey$ be the embeddings induced by warping path $p \in \P$. Then 
\[
C_p(x,y) = \normS{\Ex x - \Ey y}{^2}.
\]
\end{proposition}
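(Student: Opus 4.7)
The plan is to verify the identity by direct entry-wise computation, since both sides are defined quite explicitly in terms of the same data $(i_l, j_l)_{l=1}^L$. The guiding observation is that the rows of $\Ex$ and $\Ey$ are standard basis vectors of $\R^n$, so multiplying by $\Ex$ or $\Ey$ is nothing more than a selection (with repetition) of coordinates of the input.

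First I would compute the product $\Ex x$ coordinate by coordinate. Because the $l$-th row of $\Ex$ is $e^{i_l}$, the defining property $e^{i_l} \cdot x = x_{i_l}$ of a standard basis vector yields $(\Ex x)_l = x_{i_l}$ for every $l \in [L]$. An identical argument applied to $\Ey$ gives $(\Ey y)_l = y_{j_l}$. Subtracting componentwise, $(\Ex x - \Ey y)_l = x_{i_l} - y_{j_l}$.

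Next I would take the squared Euclidean norm of the resulting vector in $\R^L$. By definition of $\|\cdot\|^2$ on $\R^L$ this gives
\[
\normS{\Ex x - \Ey y}{^2} = \sum_{l=1}^L (x_{i_l} - y_{j_l})^2,
\]
which is precisely $C_p(x,y)$ as introduced in Section \ref{sec:background_and_related_work} (in the univariate setting the local cost reduces to the squared difference). This closes the identity.

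There is really no substantive obstacle here: the proposition is essentially a bookkeeping statement saying that the embedding matrices are the linear algebraic encoding of the pointwise selection performed by the warping path. The only place where care is needed is to remember that rows of $\Ex$ (and $\Ey$) may repeat, corresponding to the fact that a single element of $x$ (or $y$) can be aligned to several elements of the other series; this is already accounted for by the fact that we sum over all $L$ indices $l$ on both sides. In the multivariate generalization discussed in Section \ref{sec:generalizations} the same argument goes through by replacing scalar coordinates with $\R^d$-valued ones and squared differences with $\|\cdot\|^2$ in $\R^d$.
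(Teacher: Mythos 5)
Your proposal is correct and follows essentially the same route as the paper: the paper's proof likewise observes that the $l$-th component of $\Ex x - \Ey y$ is $x_{i_l} - y_{j_l}$ and then identifies the squared Euclidean norm with the sum $\sum_{l=1}^L (x_{i_l}-y_{j_l})^2 = C_p(x,y)$. Your additional remark about repeated rows is accurate but not needed for the identity itself.
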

\begin{proof}
The assertion follows from
\begin{align*}
\normS{\Phi x - \Psi y}{^2}
= \normS{
\begin{bmatrix} 
x_{i_1} - y_{j_1} \\
\vdots \\
x_{i_L} - y_{j_L} \\
\end{bmatrix} }{^2}
= \sum_{l=1}^L \argsS{x_{i_l} - y_{j_l}}{^2}  = C_p(x, y).
\end{align*}
\end{proof}

\medskip

The next result shows that we can express the warping and valence matrix as products of the embedding matrices. 

\begin{lemma}\label{lem:properties:embeddings}
Let $\Ex$ and $\Ey$ be the embeddings induced by warping path $p \in \P$. We have
\begin{align*}
W &= \Ex \tran \Ey\\
V &= \Ex \tran \Ex,
\end{align*}
where $W$ and $V$ are the warping and valence matrix of $p$.
\end{lemma}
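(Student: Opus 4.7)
My plan is to prove both identities by direct index computation, exploiting the fact that the rows of $\Phi$ and $\Psi$ are standard basis vectors. Unpacking Definition \ref{def:embeddings}, I write
\[
(\Phi\tran \Psi)_{i,j} = \sum_{l=1}^{L} \Phi_{l,i}\,\Psi_{l,j} = \sum_{l=1}^{L} (e^{i_l})_i \,(e^{j_l})_j = \sum_{l=1}^{L} \mathbf{1}[i_l = i]\,\mathbf{1}[j_l = j] = \sum_{l=1}^{L} \mathbf{1}[(i_l,j_l) = (i,j)].
\]
For the first identity, the goal is then to match this against the definition $W_{i,j} = \mathbf{1}[(i,j) \in p]$.

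The one subtlety is showing the last sum is either $0$ or $1$, i.e., that a warping path visits each grid point at most once. I would justify this briefly from the step condition: consecutive points satisfy $p_{l+1} - p_l \in \{(1,0),(0,1),(1,1)\}$, so each step strictly increases $i_l + j_l$, which rules out repetitions. With that, $\sum_l \mathbf{1}[(i_l,j_l)=(i,j)]$ equals $1$ if $(i,j)\in p$ and $0$ otherwise, giving $\Phi\tran\Psi = W$.

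For the second identity, the analogous computation gives
\[
(\Phi\tran \Phi)_{i,j} = \sum_{l=1}^{L} \mathbf{1}[i_l = i]\,\mathbf{1}[i_l = j],
\]
which vanishes whenever $i \neq j$ and otherwise equals $\#\{l : i_l = i\}$. Comparing with the defining relation $V_{i,i} = \sum_{j=1}^{n} W_{i,j}$ and using the first identity, I note that $\sum_j W_{i,j}$ counts the number of points of $p$ whose first coordinate equals $i$, i.e., exactly $\#\{l : i_l = i\}$. Hence $\Phi\tran\Phi$ is diagonal and coincides with $V$, completing the proof. No step looks to present a real obstacle; the only ingredient beyond bookkeeping is the monotonicity-from-step-condition remark noted above.
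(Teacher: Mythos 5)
Your proof is correct and follows essentially the same route as the paper: a direct entrywise computation of $\Ex\tran\Ey$ and $\Ex\tran\Ex$ using the fact that the rows of the embedding matrices are standard basis vectors, with the diagonal/off-diagonal case split for $V$. Your explicit justification that a warping path visits each grid point at most once (via strict monotonicity of $i_l+j_l$ under the step condition) is a slightly more careful version of the paper's one-line appeal to the step condition, but the argument is the same.
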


\begin{proof}
Let $p = \big( (i_1,j_1), \dots, (i_L,j_L) \big)$.

\medskip

\noindent
\textbf{1.} We first show $W = \Ex \tran \Ey$. Observe that
\begin{align*}
[\Ex \tran \Ey ]_{i,j} = \sum_{k=1}^L \Ex_{i,k}\tran \Ey_{k,j} = \sum_{k=1}^L \Ex_{k,i} \Ey_{k,j}
= \sum_{k=1}^L e^{i_k}_i e^{j_k}_j.
\end{align*}
Suppose that $(i,j) \in p$. Then there is a unique index $l \in [L]$ such that $(i,j) = (i_l, j_l)$, where uniqueness of $l$ follows from the step condition of a warping path. Then we have
\[
[\Ex \tran \Ey ]_{i,j} = e^{i_l}_i e^{j_l}_j = e^i_i e^j_j = 1.
\]
Now suppose that $(i,j) \notin p$. Then $(i,j) \neq (i_k, j_k)$ for all $k \in [L]$. Hence, we have $e^{i_k}_i e^{j_k}_k = 0$ for all $k \in [L]$ and therefore $[\Ex \tran \Ey ]_{i,j} = 0$. Combining the results shows that the elements of the matrix $[\Ex \tran \Ey ]$ are of the form
\[
[\Ex \tran \Ey ]_{i,j} = \begin{cases}
1 & (i,j) \in p\\
0 & \text{otherwise}
\end{cases},
\]
which precisely corresponds to the definition of a warping matrix $W$ of path $p$.

\medskip

\noindent
\textbf{2.} We show $V = \Ex \tran \Ex$. Let $u \in \R^n$ denote the vector of all ones. By definition of the valence matrix $V$, the diagonal elements of $V_{i,i}$ are of the form 
\begin{align*}
V_{i,i} 
= \sum_{j=1}^n W_{i,j} 
= \sum_{j=1}^n \,[\Ex \tran \Ey ]_{i,j} 
= \sum_{j=1}^n \sum_{k=1}^L e^{i_k}_i e^{j_k}_j 
= \sum_{k=1}^L e^{i_k}_i \sum_{j=1}^n e^{j_k}_j = \args{\sum_{k=1}^L e^{i_k}_i} \args{u \tran e^{j_k}} = \sum_{k=1}^L e^{i_k}_i. 
\end{align*}
From $e^{i_k}_i e^{i_k}_i = e^{i_k}_i$ follows
\[
V_{i,i} = \sum_{k=1}^L e^{i_k}_i e^{i_k}_i = \sum_{k=1}^L \Phi_{k,i} \Phi_{k,i} = \sum_{k=1}^L \Phi_{i,k} \tran \Phi_{k,i} = [\Ex \tran \Ex ]_{i,i}
\]
This shows the assertion for the diagonal elements of $V$. Since $V$ is a diagonal matrix, it is sufficient to show that $[\Ex \tran \Ex ]_{i,j} = 0$ for all distinct $i, j \in [n]$. Suppose that 
$i, j \in [n]$ with $i \neq j$. We have 
\[
[\Ex \tran \Ex ]_{i,j} = \sum_{k=1}^L \Phi_{k,i} \Phi_{k,j},
\]
where $\Phi_{k,i}$ and $\Phi_{k,j}$ is the $i$-th and $j$-th component of the standard basis vector $e^{i_k}$. Since $i \neq j$ by assumption, we find that $\Phi_{k,i} \Phi_{k,j} = 0$ for all $k \in [L]$. This shows that all off-diagonal elements of $[\Ex \tran \Ex ]$ are zero. By combining the results we obtain the second assertion.
\end{proof}

\subsection{Gradient of Component Functions: Proof of Prop.~\ref{prop:DF_C}}

Let $\S{X} = \args{x^{(1)},\dots,x^{(N)}}$ be a sample of $N$ time series $x^{(k)} \in \S{T}$. As shown in Section \ref{subsec:decomposition} the Fr\'echet function $F$ of $\S{X}$ is a pointwise minimum of the form
\[
F(x) = \min_{\S{C} \in \S{P}^N} F_{\S{C}}(x),
\]
with component functions 
\[
F_{\S{C}}(x) = \frac{1}{N}\sum_{k=1}^N C_{p^{(k)}}\!\args{x, x^{(k)}},
\]
where $\S{C} = \args{p^{(1)}, \dots, p^{(N)}} \in \S{P}^N$ is a configuration of warping paths associated with sample $\S{X}$. We restate Prop.~\ref{prop:DF_C}.

\begin{proposition*} Let $\S{X} = \args{x^{(1)},\dots,x^{(N)}} \in \S{T}^N$ be a sample of time series and let $\S{C} \in \S{P}^N$ be a configuration of warping paths. The gradient of component function $F_{\S{C}}$ at $x \in \S{T}$ is of the form
\begin{align*}
\nabla F_{\S{C}}(x) = \frac{2}{N} \,\sum_{k=1}^N \Big(V^{(k)}x - W^{(k)}x^{(k)}\Big),
\end{align*}
where $V^{(k)}$ and $W^{(k)}$ are the valence and warping matrix of warping path $p^{(k)} \in \S{C}$ associated with time series $x^{(k)}\in \S{X}$. 
\end{proposition*}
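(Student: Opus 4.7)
The plan is to reduce everything to a straightforward gradient computation by exploiting the embedding representation developed in Section \ref{ss:embeddings}. First I would fix the configuration $\mathcal{C} = \args{p^{(1)},\dots,p^{(N)}}$ and let $\Phi^{(k)}, \Psi^{(k)}$ denote the pair of embedding matrices induced by $p^{(k)}$ (Definition \ref{def:embeddings}). By Proposition \ref{prop:C=norm}, each alignment cost rewrites as
\[
C_{p^{(k)}}\!\args{x, x^{(k)}} = \normS{\Phi^{(k)} x - \Psi^{(k)} x^{(k)}}{^2},
\]
so $F_{\mathcal{C}}$ becomes the $\frac{1}{N}$-weighted sum of $N$ smooth convex quadratic forms in $x$.

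Next I would differentiate termwise, which is legitimate since each summand depends smoothly on $x$ while the remaining data $x^{(k)}$ and the matrices $\Phi^{(k)}, \Psi^{(k)}$ are fixed. Standard calculus for maps of the form $x \mapsto \norm{Ax-b}^2$ gives gradient $2A\tran(Ax-b)$, hence
\[
\nabla_x \normS{\Phi^{(k)} x - \Psi^{(k)} x^{(k)}}{^2} = 2\,\Phi^{(k)\mathsf{T}}\Phi^{(k)}\, x - 2\,\Phi^{(k)\mathsf{T}}\Psi^{(k)}\, x^{(k)}.
\]
Summing and multiplying by $1/N$ yields an expression for $\nabla F_{\mathcal{C}}(x)$ in terms of the embedding matrices.

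The final step is to translate this embedding-matrix expression back to the warping/valence-matrix language of the proposition. For that I would invoke Lemma \ref{lem:properties:embeddings}, which establishes the identities $\Phi^{(k)\mathsf{T}}\Phi^{(k)} = V^{(k)}$ and $\Phi^{(k)\mathsf{T}}\Psi^{(k)} = W^{(k)}$. Substituting these into the above sum gives the claimed formula
\[
\nabla F_{\mathcal{C}}(x) = \frac{2}{N}\sum_{k=1}^N \Big(V^{(k)} x - W^{(k)} x^{(k)}\Big).
\]

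There is no real obstacle once the embedding machinery is available; the argument is essentially a bookkeeping exercise. The only nontrivial work has already been offloaded to Lemma \ref{lem:properties:embeddings}, where the combinatorics of the warping path (its step and boundary conditions) is converted into the clean algebraic identities $V = \Phi\tran\Phi$ and $W = \Phi\tran\Psi$. I would therefore state the proposition as an immediate corollary of Proposition \ref{prop:C=norm} together with Lemma \ref{lem:properties:embeddings}, keeping the proof to a few lines.
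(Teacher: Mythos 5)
Your proposal is correct and follows exactly the paper's own argument: rewrite each alignment cost via Proposition \ref{prop:C=norm} as $\normS{\Ex^{(k)}x - \Ey^{(k)}x^{(k)}}{^2}$, differentiate the resulting quadratic termwise, and convert $\Ex^{(k)\mathsf{T}}\Ex^{(k)}$ and $\Ex^{(k)\mathsf{T}}\Ey^{(k)}$ into $V^{(k)}$ and $W^{(k)}$ using Lemma \ref{lem:properties:embeddings}. No differences worth noting.
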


\begin{proof}
Let $\Ex^{(k)}$ and $\Ey^{(k)}$ denote the embeddings induced by warping path $p^{(k)} \in \S{C}$ for all $k \in [N]$. From Prop.~\ref{prop:C=norm} follows that 
\[
F_{\S{C}}(x) = \frac{1}{N} \sum_{k=1}^N \normS{\Ex^{(k)}x - \Ey^{(k)}x^{(k)}}{^2}.
\]
The function $F_{\S{C}}$ is differentiable with gradient 
\begin{align*}
\nabla F_{\S{C}}(x) &= \frac{2}{N}\sum_{k=1}^N \Ex^{(k)}{\,\tran} \args{\Ex^{(k)}x - \Ey^{(k)}x^{(k)}}\\
&= \frac{2}{N}\sum_{k=1}^N \args{\Ex^{(k)}{\,\tran}\Ex^{(k)}x - \Ex^{(k)}{\,\tran}\Ey^{(k)}x^{(k)}}\\
&= \frac{2}{N}\sum_{k=1}^N \args{V^{(k)}x - W^{(k)}x^{(k)}},
\end{align*}
where the last line follows from Lemma \ref{lem:properties:embeddings}. 
\end{proof}

\subsection{Necessary Conditions of Optimality: Proof of Theorem \ref{theorem:form}}\label{ss:proofOfMainThm}

\begin{theorem*}
Let $F$ be the Fr\'echet function of sample $\S{X} = \args{x^{(1)},\dots,x^{(N)}} \in \S{T}^N$. 
If $z \in \S{T}$ is a local minimizer of $F$, then there is a configuration $\S{C} \in \S{P}^N$ such that the following conditions are satisfied:
\begin{description}
\item[(C1)] $F(z) = F_{\S{C}}(z)$.
\item[(C2)] We have
\begin{align*}
z = \argsS{\sum_{k = 1}^N V^{(k)}}{^{-1}} \args{\sum_{k = 1}^N W^{(k)} \,x^{(k)}},
\end{align*}
where $V^{(k)}$ and $W^{(k)}$ are the valence and warping matrix of $p^{(k)}\in \S{C}$ for all $k \in [N]$. 
\end{description}
\end{theorem*}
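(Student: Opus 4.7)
The plan is to exploit the majorization structure $F(x) = \min_{\S{C} \in \S{P}^N} F_{\S{C}}(x)$ developed in Section~\ref{subsec:decomposition} and reduce the theorem to a first-order optimality condition on a single convex component function. First, I would pick any configuration $\S{C}$ that is active at $z$; such a configuration exists because $\S{P}^N$ is finite, so the pointwise minimum defining $F(z)$ is attained. Condition (C1) is then immediate from the definition of an active component. Choosing $\S{C}$ in this way is precisely how Theorem~\ref{theorem:form} allows the configuration in its statement to depend on $z$.

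Next, I would promote local minimality of $z$ for $F$ to local minimality of $z$ for $F_{\S{C}}$. The key inequality is $F(x) \leq F_{\S{C}}(x)$ for every $x \in \S{T}$, which holds since $F$ is the pointwise minimum. Combined with $F(z) = F_{\S{C}}(z)$ and $F(x) \geq F(z)$ for $x$ in a small neighborhood of $z$, this chains to
\[
F_{\S{C}}(x) \geq F(x) \geq F(z) = F_{\S{C}}(z),
\]
so $z$ is a local minimizer of $F_{\S{C}}$. Because $F_{\S{C}}$ is a sum of squared Euclidean norms composed with linear embeddings (see Section~\ref{subsec:decomposition} together with Prop.~\ref{prop:C=norm}), it is convex and differentiable, so any local minimizer is a stationary point, i.e.\ $\nabla F_{\S{C}}(z) = 0$.

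I would then substitute the gradient formula from Prop.~\ref{prop:DF_C},
\[
\nabla F_{\S{C}}(z) = \frac{2}{N} \sum_{k=1}^N \bigl(V^{(k)}z - W^{(k)}x^{(k)}\bigr) = 0,
\]
rearrange to $\bigl(\sum_k V^{(k)}\bigr) z = \sum_k W^{(k)} x^{(k)}$, and solve for $z$ to recover (C2). The remaining obstacle, and really the only nontrivial bookkeeping step, is justifying that $\sum_k V^{(k)}$ is invertible. For this I would use that each $V^{(k)}$ is diagonal with entries $V^{(k)}_{i,i} = \sum_j W^{(k)}_{i,j} \geq 1$: the boundary and step conditions force every row index $i \in [n]$ to appear as the first coordinate of at least one point of the warping path $p^{(k)}$, so each diagonal entry of $V^{(k)}$ is at least one. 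Consequently $\sum_k V^{(k)}$ is a diagonal matrix with strictly positive diagonal, hence invertible, and the inversion in (C2) is legitimate.
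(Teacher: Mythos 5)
Your proof is correct and follows essentially the same route as the paper: pick an active component function at $z$ to get (C1), use the sandwich $F_{\S{C}}(x) \geq F(x) \geq F(z) = F_{\S{C}}(z)$ to transfer local minimality to the convex, differentiable $F_{\S{C}}$, and read off (C2) from $\nabla F_{\S{C}}(z)=0$ via Prop.~\ref{prop:DF_C} (the paper packages the middle step as a case distinction against the unique minimizer of $F_{\S{C}}$, but the content is identical). Your explicit justification that $\sum_k V^{(k)}$ is invertible because every diagonal valence is at least one is a detail the paper leaves implicit, and is a welcome addition.
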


\begin{proof}
\noindent
\textbf{1. }
We apply Prop.~\ref{prop:C=norm}, to write the Fr\'echet function $F$ of $\S{X}$ as
\[
F(x) = \frac{1}{N} \sum_{k=1}^N \min_{\S{C} } \normS{\Phi^{(k)}x - \Psi^{(k)}x^{(k)}}{^2},
\]
where the minimum is taken over all configurations $\S{C} \in \S{P}^N$. The matrices $\Phi^{(k)}$ and $\Psi^{(k)}$ are the embeddings induced by the warping paths $p^{(k)} \in \S{C}$. Suppose that $z \in \S{T}$ is a local minimizer of $F$. Then there is a configuration $\S{C}_* \in \S{P}^N$ consisting of optimal warping paths $p_*^{(k)} \in \S{P}_*\!\args{z, x^{(k)}}$ such that 
\[
F(z) = \frac{1}{N} \sum_{k=1}^N \normS{\Phi_*^{(k)}z - \Psi_*^{(k)}x^{(k)}}{^2} = F_{\S{C}_*}(z),
\]
where $\Phi_*^{(k)}$ and $\Psi_*^{(k)}$ are the embeddings induced by the optimal warping paths $p_*^{(k)} \in \S{C}_*$. The last equation shows condition (C1). 

\medskip

\noindent
\textbf{2. }
The function
\[
F_{\S{C}_*}(x) = \sum_{k=1}^N \normS{\Phi_*^{(k)}x - \Psi_*^{(k)}x^{(k)}}{^2}
\]
is convex and differentiable. By Prop.~\ref{prop:DF_C}, the gradient of $F_{\S{C}_*}(x)$ is of the form
\[
\nabla F_{\S{C}_*}(x) = \frac{2}{N} \,\sum_{k=1}^N \Big(V_*^{(k)}x - W_*^{(k)}x^{(k)}\Big),
\]
where $V_*^{(k)}$ and $W_*^{(k)}$ are the valence and warping matrix of $p_*^{(k)} \in \S{C}_*$. Setting the gradient of $F_{\S{C}_*}$ to zero and solving the equation gives
\begin{align}\label{eq:proof:theorem:form:01}
z_* = \argsS{\sum_{k = 1}^N V_*^{(k)}}{^{-1}} \args{\sum_{k = 1}^N W_*^{(k)} \,x^{(k)}}
\end{align}
as unique minimizer of $F_{\S{C}_*}(x)$. Moreover, we have 
\begin{align}
\label{eq:proof:theorem:form:02}
 F(z) &= F_{\S{C}_*}(z) \geq F_{\S{C}_*}(z_*) \\
 \label{eq:proof:theorem:form:03}
 F(x) & \leq F_{\S{C}_*}(x)
\end{align}
for all $x \in \S{T}$ by construction. 

\medskip

\noindent
\textbf{3. }
We distinguish between two cases: 
\begin{enumerate}
\item $F_{\S{C}_*}(z) = F_{\S{C}_*}(z_*)$: This directly implies $z = z_*$, because $F_{\S{C}_*}(x)$ has a unique minimizer. Then condition (C2) follows from Eq.~\eqref{eq:proof:theorem:form:01}. 
\item $F_{\S{C}_*}(z) > F_{\S{C}_*}(z_*)$: We show that this case contradicts the assumption that $z$ is a local minimizer. Let $\S{B} = \S{B}(z, \varepsilon)$ be the Euclidean ball with center $z$ and radius $\varepsilon > 0$. Then for every $\varepsilon > 0$ there is a time series $x \in \S{B}$ satisfying $F_{\S{C}_*}(x) < F_{\S{C}_*}(z)$ because $F_{\S{C}_*}$ is convex and $z$ is not a minimum of $F_{\S{C}_*}$. From Eq.~\eqref{eq:proof:theorem:form:02} and \eqref{eq:proof:theorem:form:03} follows that 
\[
 F(x) \leq F_{\S{C}_*}(x) < F_{\S{C}_*}(z) = F(z).
\]
Since $\varepsilon$ can be arbitrarily small, we find that $z$ is not a local minimizer, which contradicts our assumption. Hence, the first case applies. 
\end{enumerate}
\vspace{-3ex}
\end{proof}

\subsection{Sufficient Conditions of Optimality: Proof of Prop.~\ref{prop:sufficient-condition}}

\begin{proposition*}
Let $F$ be the Fr\'echet function of sample $\S{X}$ and let $z \in \S{T}$ be a time series. Suppose that there is a configuration $\S{C} \in \S{P}^N$ such that $z$ satisfies the necessary conditions (C1) and (C2). If $\S{C}$ is unique, then $F(z)$ is a local minimum. 
\end{proposition*}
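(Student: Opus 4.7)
The plan is to exploit the fact that the Fréchet function is the pointwise minimum of finitely many convex, continuous component functions (since $\S{P}^N$ is finite), and then use the uniqueness hypothesis to localize the minimum of $F_{\S{C}}$ as a minimum of $F$. The role of uniqueness is precisely to ensure that, near $z$, none of the competing component functions $F_{\S{C}'}$ with $\S{C}' \neq \S{C}$ ever dips below $F_{\S{C}}$; this turns the problem of locally minimizing a nonsmooth function into the trivial problem of minimizing a single smooth convex one.

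First I would unpack the hypothesis. Uniqueness of $\S{C}$ means $\S{A}_F(z) = \{F_{\S{C}}\}$, i.e.
\[
F_{\S{C}'}(z) > F_{\S{C}}(z) \qquad \text{for every } \S{C}' \in \S{P}^N \setminus \{\S{C}\}.
\]
Since each $F_{\S{C}'}$ is a polynomial on $\R^n$ (in particular continuous), there is an open Euclidean neighborhood $\S{B}_{\S{C}'}$ of $z$ on which the strict inequality $F_{\S{C}}(x) < F_{\S{C}'}(x)$ persists. Because $\S{P}$ is finite (the warping paths of order $n\times n$ form a finite set), the product $\S{P}^N$ is finite as well, so the intersection
\[
\S{B} \;=\; \bigcap_{\S{C}' \neq \S{C}} \S{B}_{\S{C}'}
\]
is still an open neighborhood of $z$. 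On $\S{B}$, the component $F_{\S{C}}$ is pointwise no larger than every other component, so
\[
F(x) \;=\; \min_{\S{C}' \in \S{P}^N} F_{\S{C}'}(x) \;=\; F_{\S{C}}(x) \qquad \text{for all } x \in \S{B}.
\]

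Next I would use condition (C2) together with Prop.~\ref{prop:DF_C}. The formula in (C2) is exactly the solution of $\nabla F_{\S{C}}(z)=0$, and since $F_{\S{C}}$ is convex and differentiable, this makes $z$ a global minimizer of $F_{\S{C}}$ on $\R^n$. Combining this with the local identification above gives
\[
F(x) \;=\; F_{\S{C}}(x) \;\geq\; F_{\S{C}}(z) \;=\; F(z) \qquad \text{for all } x \in \S{B},
\]
where the last equality is condition (C1). Hence $z$ is a local minimizer of $F$.

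I do not expect a serious obstacle here; the only subtle point is verifying that the neighborhood construction is legitimate, which rests on two ingredients that are already essentially in hand: finiteness of $\S{P}^N$ (so that a finite intersection of open sets is open) and continuity of each $F_{\S{C}'}$ (which follows from Prop.~\ref{prop:C=norm} since each $F_{\S{C}'}$ is a sum of squared linear maps). A minor cautionary remark worth including is that this argument shows $z$ is the unique local minimum on $\S{B}$, because the convex function $F_{\S{C}}$ has a unique minimizer on $\R^n$ by (C2); thus the proof also yields a small amount of strict local minimality for free.
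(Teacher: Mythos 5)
Your proof is correct and follows essentially the same route as the paper: uniqueness of the active configuration forces $F$ to coincide with the smooth convex component $F_{\S{C}}$ on a neighborhood of $z$, where condition (C2) makes $z$ a minimizer. The only cosmetic differences are that you conclude via convexity and the vanishing gradient while the paper invokes the second-order sufficient condition (positive Hessian of $F_{\S{C}}$), and that you spell out the finiteness-plus-continuity neighborhood argument that the paper leaves implicit.
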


\begin{proof}
From uniqueness of $\S{C}$ follows that the active set at $z$ is of the form $\S{A}_F(z) = \cbrace{F_{\S{C}}}$. Then $F(z) = F_{\S{C}}(z)$ is continuously differentiable in a neighborhood of $z$ with positive Hessian at $z$. Since $z$ satisfies condition (C2), the gradient of $F_{\S{C}}$ at $z$ exists and vanishes. Then the assertion follows from the sufficient conditions of a local minimum of continuously differentiable functions. 
\end{proof}

\subsection{Convergence of the MM Algorithm: Proof of Theorem \ref{theorem:convergence_of_MM}}

We first introduce some notations and definitions. By $2^{\S{X}}$ we denote the set of all subsets of some set $\S{X}$. A \emph{point-to-set map} on $\S{X}$ is a map $\alpha: \S{X} \rightarrow 2^{\S{X}}$ that assigns each point $x \in \S{X}$ to a subset $\alpha(x) \subseteq \S{X}$. Let $\S{Z}_* \subseteq \S{X}$ be a solution set. A function $f:\S{X} \rightarrow \R$ is a \emph{descent function} for $\S{Z}_*$ and $\alpha$, if 
\begin{enumerate}[(i)]
\item $x \notin \S{Z}_*$, then $f(z) < f(x)$ for all $z \in \alpha(x)$
\item $x \in \S{Z}_*$, then $f(z) \leq f(x)$ for all $z \in \alpha(x)$.
\end{enumerate}
Suppose that $\S{X} = \args{x^{(1)}, \dots, x^{(N)}}$ is a sample of $N$ time series $x^{(k)} \in \S{T}$. A configuration 
\[
\S{C} = \args{p^{(1)}, \dots, p^{(N)}} \in \S{P}^N
\] 
is an \emph{optimal configuration} for $z \in \S{T}$, if $p^{(k)} \in \S{P}_*\!\args{z, x^{(k)}}$ are optimal warping paths between $z$ and $x^{(k)} \in \S{X}$. 
By $\S{O}(z) \subseteq \S{P}^N$ we denote the set of optimal configurations for $z$. Note that $\S{C} \in \S{O}(z)$ yields $F(z) = F_{\S{C}}(z)$ by definition.
\medskip

\noindent
We restate Theorem \ref{theorem:convergence_of_MM}. 

\begin{theorem*}
The MM algorithm terminates after a finite number of iterations at a solution satisfying the necessary conditions of optimality (C1) and (C2).
\end{theorem*}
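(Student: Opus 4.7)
The plan is to exploit the finiteness of $\S{P}^N$ together with strict descent of the Fr\'echet variation along non-stationary iterations. Let $z^{(t)}$ denote the iterate at step $t$ and let $\S{C}^{(t)} \in \S{O}\!\args{z^{(t)}}$ be the configuration of optimal warping paths chosen in the majorization step, so that $z^{(t+1)}$ is the minimizer of $F_{\S{C}^{(t)}}$ as given in Eq.~\eqref{eq:theorem:form}.

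First I would verify that each component function $F_{\S{C}}$ is strictly convex, so that its minimizer is unique and coincides with the closed-form expression in (C2). By Prop.~\ref{prop:DF_C} and a short differentiation, the Hessian of $F_{\S{C}}$ equals $\tfrac{2}{N}\sum_{k=1}^N V^{(k)}$. The boundary and step conditions force every index $i \in [n]$ to appear at least once as a first coordinate of any warping path of order $n \times n$, so $V_{i,i}^{(k)} \geq 1$ for all $i$ and $k$. Hence $\sum_k V^{(k)}$ is diagonal and positive definite, which yields strict convexity of every $F_{\S{C}}$.

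Next I would establish strict descent. The chain
\[
F\!\args{z^{(t+1)}} \leq F_{\S{C}^{(t)}}\!\args{z^{(t+1)}} \leq F_{\S{C}^{(t)}}\!\args{z^{(t)}} = F\!\args{z^{(t)}}
\]
combines Eq.~\eqref{eq:F_as_min}, the minimality of $z^{(t+1)}$ for $F_{\S{C}^{(t)}}$, and the fact that $\S{C}^{(t)} \in \S{O}\!\args{z^{(t)}}$. Strict convexity of $F_{\S{C}^{(t)}}$ upgrades the middle inequality to a strict one whenever $z^{(t+1)} \neq z^{(t)}$.

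For finite termination, observe that for every $t \geq 1$ the iterate $z^{(t)}$ belongs to the finite set $\cbrace{z_{\S{C}} \,:\, \S{C} \in \S{P}^N}$, where $z_{\S{C}}$ denotes the unique minimizer of $F_{\S{C}}$. Since $F$ takes only finitely many values on this set and strictly decreases along the iterates whenever they actually change, the sequence must stabilize after finitely many steps, giving $z^{(t+1)} = z^{(t)}$ for some $t$. At that index, (C2) holds because $z^{(t)}$ is the minimizer of $F_{\S{C}^{(t)}}$, and (C1) holds because $\S{C}^{(t)} \in \S{O}\!\args{z^{(t)}}$ by construction of the majorization step. The main obstacle is the strict-convexity claim: without the coordinatewise lower bound $V_{i,i}^{(k)} \geq 1$, the Hessian could be singular along some direction and the strict-descent-implies-finite-termination argument would collapse, since one could then have $z^{(t+1)} \neq z^{(t)}$ while $F_{\S{C}^{(t)}}\!\args{z^{(t+1)}} = F_{\S{C}^{(t)}}\!\args{z^{(t)}}$. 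Once strict convexity is in place, the rest is a short combinatorial argument on the finite set of configurations.
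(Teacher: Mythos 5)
Your proof is correct and follows essentially the same route as the paper's: the descent chain $F(z^{(t+1)}) \leq F_{\S{C}^{(t)}}(z^{(t+1)}) \leq F_{\S{C}^{(t)}}(z^{(t)}) = F(z^{(t)})$, uniqueness of each component minimizer, and a pigeonhole argument on the finitely many configurations (the paper phrases this as the $\S{C}^{(t)}$ being pairwise distinct along a strictly decreasing trajectory, you phrase it as the iterates living in the finite set of candidate minimizers; these are equivalent). Your explicit verification that $\sum_k V^{(k)}$ is positive definite, via $V^{(k)}_{i,i} \geq 1$ from the boundary and step conditions, is a welcome addition that the paper leaves implicit when it asserts uniqueness of the minimizer of $F_{\S{C}}$.
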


\begin{proof}
The Fr\'echet function $F$ can be written as 
\[
F(x) = \min_{\S{C} \in \S{P}^N} F_{\S{C}}(x),
\]
where $F_{\S{C}}$ are the component functions of $F$. Then the point-to-set map $\alpha$ on $\S{T}$ of the MM algorithm is of the form 
\[
\alpha(z) = \cbrace{\argmin_{x\,\in\,\S{T}} F_{\S{C}}(x) \,:\, \S{C} \in \S{O}(z)}.
\] 

\medskip

Let $\S{Z}_*$ denote the solution set consisting of all time series that satisfy conditions (C1) and (C2). We show that the Fr\'echet function $F$ is a descent function for $\S{Z}_*$ and $\alpha$. Suppose that $x \in \S{T}$. Any element $z \in \alpha(x)$ is the unique minimum of a component function $F_{\S{C}}$, where $\S{C}\in \S{O}(x)$ is an optimal configuration for $x$. We have
\begin{enumerate}
\itemsep0em
\item
 $F(z) \leq F_{\S{C}}(z)$ by definition of the Fr\'echet function. 
 \item 
 $F_{\S{C}}(z) \leq F_{\S{C}}(x)$, because $z$ is a minimum of $F_{\S{C}}$.
 \item 
 $F_{\S{C}}(x) = F(x)$, because $\S{C}$ is an optimal configuration.
\end{enumerate}
Combining the three inequalities yields $F(z) \leq F(x)$. Suppose that $x \notin \S{Z}_*$. Since $z$ is the unique minimum of the convex function $F_{\S{C}}$, we have $F_{\S{C}}(z) < F_{\S{C}}(x)$ giving $F(z) < F(x)$. This proves the properties (i) and (ii) of a descent function.

\medskip

Let $z \in \S{T}$ be the current solution. Then the next solution $z' \in \alpha(z)$ is the unique minimum of a component function $F_{\S{C}}$, where $\S{C} \in \S{O}(z)$ is an optimal configuration for $z$. Suppose that the MM algorithm terminates at $z'$. We show that $z' \in \S{Z}_*$. Observe that 
\[
F(z') \leq F_{\S{C}}(z') = \min_x F_{\S{C}}(x) \leq F(z).
\]
Termination is enforced by $F(z') = F(z)$. In this case, we have $F(z') = F_{\S{C}}(z')$, which is condition (C1). Condition (C2) holds by construction of the map $\alpha$.

\medskip

It remains to show that the MM algorithm terminates after a finite number of iterations. Let $\args{z^{(t)}}$ be a sequence with $z^{(t+1)} \in \alpha\!\args{z^{(t)}}$. Suppose that $\args{\S{C}^{(t)}}$ is a sequence of optimal configurations $\S{C}^{(t)} \in \S{O}\!\args{z^{(t)}}$ for $z^{(t)}$ such that 
\[
z^{(t+1)} = \argmin_{x \in \S{T}} F_{\S{C}^{(t)}}(x).
\]
We assume that the sequence $\args{z^{(t)}}$ is infinite. Since $F$ is a descent function and the MM algorithm does not terminate, we have $F\!\args{z^{(s)}} < F\!\args{z^{(t)}}$ for all $s, t \in \N$ such that $0 \leq t < s$. Observe that 
\[
F\!\args{z^{(s+1)}} \leq F_{\S{C}^{(s)}}\!\args{z^{(s+1)}} \leq F\!\args{z^{(s)}} \leq F\!\args{z^{(t+1)}} \leq F_{\S{C}^{(t)}}\!\args{z^{(t+1)}} \leq F\!\args{z^{(t)}}. 
\]
Suppose that $\S{C}^{(s)} = \S{C}^{(t)}$. Since the minimum of component function $F_{\S{C}^{(t)}}$ is unique, we find that $z^{(s+1)} = z^{(t+1)}$. This yields $F\!\args{z^{(s+1)}} = F\!\args{z^{(t+1)}}$, which contradicts our assumption that $F$ is strictly decreasing. This shows that $\S{C}^{(s)} \neq \S{C}^{(t)}$ for all $s > t$. Hence, the configurations of the sequence $\args{\S{C}^{(t)}}$ are mutually distinct. This contradicts our assumption that the sequence $\args{z^{(t)}}$ is infinite, because there are only finitely many different configurations. Consequently, the MM algorithm terminates after finitely many iterations to a solution satisfying the necessary conditions of optimality. 
\end{proof}

\section{Generalizations}\label{sec:generalizations}

In the main text, we considered the special case of univariate sample time series of fixed length. This section briefly sketches in two steps how to generalize the results of the univariate-fixed case. The first step relaxes the condition that the univariate sample time series are of fixed length. We call this case the univariate-variable case. The second step relaxes the condition that the time series are univariate, called the multivariate-variable case. 

\subsection{Generalization to the Univariate-Variable Case}\label{subsec:univariate-variable}

The univariate-variable case assumes that $F :\S{T}_n \rightarrow \R$ is a Fr\'echet function of sample $\S{X} = \args{x^{(1)}, \dots, x^{(N)}}$ consisting of $N$ univariate time series $x^{(k)} \in \S{T}_{n_k}$ of possibly variable length $n_k \in \N$. 

\medskip

Suppose that $x \in \S{T}_n$ is the argument of the Fr\'echet function and $y \in \S{T}_{m}$ is a sample time series, that is $y = x^{(k)}$ and $m = n_k$ for some $k \in [N]$. The results carry over to the univariate-variable case for the following reasons: 

\smallskip

\noindent
1. The cost $C_p(x, y)$ of aligning $x$ and $y$ along warping path $p \in \S{P}_{n, m}$ can be regarded as a differentiable convex function on $\R^n$. By adopting the same arguments as in Section \ref{sec:properties}, we find that the Fr\'echet function $F$ of sample $\S{X}$ is also a pointwise minimum of a set of differentiable convex functions and is therefore locally Lipschitz. Hence, the subdifferential calculus is also applicable to the univariate-variable case. 

\smallskip

\noindent
2. The embedding matrices $\Phi \in \R^{L \times n}$ and $\Psi \in \R^{L \times m}$ induced by a warping path $p \in \S{P}_{n,m}$ of length $L$ can be defined in a similar way as in Definition \ref{def:embeddings}. What has changed is that the rows of $\Psi$ have the form of standard basis vectors from $\R^m$ rather than from $\R^n$.

\smallskip

\noindent
3. The statement of Lemma \ref{lem:properties:embeddings} and its proof carry over to the univariate-variable case. The warping matrix $W = \Phi^{\T}\Psi$ is a well-defined ($n \times m$)-matrix and the valence matrix $V = \Phi^{\T}\Phi$ is a well-defined ($n \times n$)-matrix. The proof of Lemma \ref{lem:properties:embeddings} requires some noncritical adaptions to account for the length $m$ of the sample time series $x$. Apart from this, the proof remains the same. 

\smallskip

\noindent
4. By construction, the expressions $Vx$ and $Wy$ are well-defined vectors in $\R^n$ regardless of the length $m$ of sample time series $y$. Consequently, the expression
\[
\sum_{k=1}^N V^{(k)}x-W^{(k)}x^{(k)}
\] 
is a well-defined vector in $\R^n$, where the $x^{(k)}$ are the sample time series of sample $\S{X}$. The proofs of Prop.~\ref{prop:DF_C} and Theorem \ref{theorem:form} directly carry over to the univariate-variable case. The gradient of $F$ and condition (C2) of Theorem \ref{theorem:form} are essentially of the same form as in the univariate-fixed case. What differs are the dimensions of the warping matrices. 
 
\smallskip

\noindent
5. The proofs of all other results are independent of the length of the sample time series.

\subsection{Generalization to the Multivariate-Variable Case}

The multivariate-variable case assumes that $F :\S{T}_n \rightarrow \R$ is a Fr\'echet functions of sample $\S{X} = \args{x^{(1)}, \dots, x^{(N)}}$ consisting of $N$ multivariate time series $x^{(k)} \in \S{T}_{n_k}$ of possibly variable length $n_k \in \N$. 

\medskip

A $d$-dimensional time series $x$ of length $n$ can be represented as a matrix $x = (x_{i,j}) \in \R^{n \times d}$, where $d \geq 1$. Every column $x_{:j}$ of matrix $x$ corresponds to a univariate \emph{component time series} of the form
\begin{align*}
x_{:j} = (x_{1j}, \dots, x_{nj}) \in \R^n
\end{align*}
for all $j \in [d]$. Moreover, every row $x_i$ of matrix $x$ represents a $d$-dimensional feature vector 
\begin{align*}
x_{i} = (x_{i1}, \dots, x_{id}) \in \R^d.
\end{align*}
for every time point $i \in [n]$. 

We assume that all multivariate time series under considerations are of dimension $d$. Then by $\S{T}_n$ we denote the set of multivariate time series of length $n$. Suppose that $x \in \S{T}_n$ is the argument of the Fr\'echet function and $y \in \S{T}_{m}$ is a sample time series, that is $y = x^{(k)}$ and $m = n_k$ for some $k \in [N]$. The results carry over to the multivariate-variable case for the following reasons: 

\smallskip

\noindent
1. Let $p = (p_1, \dots, p_L) \in \S{P}_{n, m}$ be a warping path between $x$ and $y$ with elements $p_l = (i_l, j_l)$ for all $l \in [L]$. The cost $C_p(x, y)$ of aligning $x$ and $y$ along warping path $p$ is of the form
\[
C_p(x, y) = \sum_{l = 1}^L \normS{x_{i_l}-y_{j_l}}{^2}.
\]
By assuming that $y$ is a fixed parameter, we can regard the cost $C_p$ as a differentiable convex function on $\R^{n \times d}$. 
Then as in Section \ref{subsec:univariate-variable}, we find that the Fr\'echet function $F$ of sample $\S{X}$ is locally Lipschitz continuous as a pointwise minimum of a set of differentiable convex functions. Therefore, the subdifferential calculus is also applicable to the multivariate-variable case. 

\smallskip

\noindent
2. Next, we construct the embeddings of a warping path. Suppose that $p = (p_1, \dots, p_L) \in \P_{n, m}$ is a warping path between $x$ and $y$. The multivariate time warping embeddings $\Phi': \S{T}_n \rightarrow \S{T}_L$ and $\Psi' : \S{T}_m \rightarrow \S{T}_L$ induced by $p$ are linear functions defined by
\begin{align*}
\Phi'(x) = \begin{bmatrix} 
x_{i_1} \\
\vdots \\
x_{i_L} \\
\end{bmatrix} 
,\qquad \Psi'(y) = 
\begin{bmatrix} 
y_{j_1} \\
\vdots \\
y_{j_L} \\
\end{bmatrix}.
\end{align*}
We regard $\Phi'$ and $\Psi'$ as linear mappings $\Phi':\R^{n \times d} \rightarrow \R^{L \times d}$ and $\Psi':\R^{m \times d} \rightarrow \R^{L \times d}$. Then the multivariate formulation of Prop.~\ref{prop:C=norm} remains true with Frobenius norm $\norm{(x_{i,j})}_F = \left( \sum_{i=1}^N \sum_{j=1}^d x_{i,j}^2 \right)^{1/2}$:
\[
C_p(x,y) = \norm{\Ex'( x) - \Ey'( y)}_F^2.
\]

\smallskip

\noindent
3. The operations of $\Phi'$ and $\Psi'$ induced by warping path $p \in \P_{n,m}$ can be expressed by componentwise matrix multiplications with the embedding matrices (cf.~Section \ref{ss:embeddings}) $\Phi$ and $\Psi$ induced by $p$:
\begin{align*}
\Phi' (x)  =  \begin{bmatrix} \Phi x_{:1} & \dots  &\Phi x_{:d} \end{bmatrix},
\qquad \Psi'(y)  =  \begin{bmatrix} \Psi y_{:1} & \dots  &\Psi y_{:d} \end{bmatrix}.
\end{align*}
Therefore, componentwise, everything reduces to the univariate-variable case. This is sufficient, because gradients are defined componentwise and the necessary conditions of optimality are derived from gradients. Hence, all other results carry over to the multivariate-variable case. 

\smallskip

\noindent
4. When considering the problem componentwise, it is important to note that the valence and warping matrices $V^{(k)}$ and $W^{(k)}$, resp., are induced by warping paths  $p^{(k)} \in \P_{n,n_k}$ of a \emph{single} configuration $\S{C} = \args{p^{(1)}, \dots, p^{(N)} }$ and therefore identical for each component time series.
For example the gradient $\nabla F_{\S{C}} (x) \in \R^{n \times d}$ of a component function $F_C$ at some $x \in \T_n$ is given by
\begin{align*}
\nabla F_{\S{C}}(x) = \frac{2}{N}  \sum_{k=1}^N
\begin{bmatrix}
 \Big(V^{(k)}x_{:1} - W^{(k)}x_{:1}^{(k)}\Big) & \dots & \Big(V^{(k)}x_{:d} - W^{(k)}x_{:d}^{(k)}\Big)
\end{bmatrix}.
\end{align*}
\end{small}

\section*{References}


\begin{thebibliography}{00}
\setlength{\parskip}{0pt}
\setlength{\itemsep}{0pt plus 0.3ex}
\small
\bibitem{Abdulla2003}
W.H.~Abdulla, D.~Chow, and G.~Sin. 
\newblock Cross-words reference template for DTW-based speech recognition systems. 
\newblock \emph{Conference on Convergent Technologies for Asia-Pacific Region}, 2003.

\bibitem{Bagirov2014}
A.~Bagirov, N.~Karmitsa, and M.M.~M\"akel\"a.
\newblock \emph{Introduction to Nonsmooth Optimization}.
\newblock Springer International Publishing, 2014.

\bibitem{Bengio2013}
Y.~Bengio.
\newblock Practical Recommendations for Gradient-Based Training of Deep Architectures.
\newblock \emph{Neural Networks: Tricks of the Trade}, K.-R.~M\"uller, G.~Montavon, and G.B.~Orr (eds.), Springer, 2013.

\bibitem{Bonnans2006}
J.F. Bonnans, J.C. Gilbert, C. Lemarechal, and C.A. Sagastizbal. 
\newblock \emph{Numerical optimization: theoretical and practical aspects}. Springer Science+Business Media, 2006.

\bibitem{Chen2015}
Y.~Chen, E.~ Keogh, B.~Hu, N.~Begum, A.~Bagnall, A.~Mueen, and G.~Batista.
\newblock \emph{The UCR Time Series Classification Archive}.
\newblock URL: \url{www.cs.ucr.edu/~eamonn/time_series_data/}, 2015.

\bibitem{Clarke1990}
F.H.~Clarke. 
\newblock \emph{Optimization and Nonsmooth Analysis}. 
\newblock Society for Industrial and Applied Mathematics, 1990.

\bibitem{Dimitriadou2002}
E.~Dimitriadou, A.~Weingessel, and K.~Hornik. 
\newblock A combination scheme for fuzzy clustering. 
\newblock \emph{International Journal of Pattern Recognition and Artificial Intelligence}, 16(07):901--912, 2002.

\bibitem{Ermoliev1998}
Y.~Ermoliev and V.~Norkin.
\newblock Stochastic generalized gradient method for nonconvex nonsmooth stochastic optimization.
\newblock \emph{Cybernetics and Systems Analysis}, 34(2):196--215, 1998.

\bibitem{Evans1992}
L.C.~Evans and R.F.~Gariepy. 
\newblock \emph{Measure theory and fine properties of functions}.
\newblock CRC Press, 1992.

\bibitem{Frechet1948}
M.~Fr\'{e}chet.
\newblock Les \'el\'ements al\'eatoires de nature quelconque dans un espace distanci\'e.
\newblock \emph{Annales de l'institut Henri Poincar\'e}, 215--310, 1948.

\bibitem{Gupta1996}
L.~Gupta, D.~Molfese, R.~Tammana, and P.G.~Simos.
\newblock Nonlinear alignment and averaging for estimating the evoked potential.
\newblock \emph{IEEE Transactions on Biomedical Engineering}, 43(4):348--356, 1996.

\bibitem{Gusfield1997}
D.~Gusfield.
\newblock \emph{Algorithms on strings, trees and sequences: computer science and computational biology}.
 \newblock Cambridge University Press, 1997.

\bibitem{Hautamaki2008}
V.~Hautamaki, P.~Nykanen, P.~Franti.
\newblock Time-series clustering by approximate prototypes.
\newblock \emph{International Conference on Pattern Recognition}, 2008.

\bibitem{Hunter2004}
D.~Hunter and K.~Lange. 
\newblock A tutorial on MM algorithms.
\newblock \emph{The American Statistician }, 58(1):30-37, 2004.

\bibitem{Jain2016a}
B.J.~Jain.
\newblock Statistical Analysis of Graphs.
\newblock \emph{Pattern Recognition}, 60:802--812, 2016.

\bibitem{Jain2016b}
B.J.~Jain and D.~Schultz.
\newblock A Reduction Theorem for the Sample Mean in Dynamic Time Warping Spaces.
\newblock \emph{arXiv preprint}, arXiv:1610.04460, 2016.

\commentout{
\bibitem{Juang1990}
B.H.~Juang and L.R.~Rabiner. 
\newblock Hidden Markov models for speech recognition. 
\newblock \emph{Technometric}, 33(3), 251--272, 1990.
}

\bibitem{Lummis1973}
R.~Lummis.
\newblock Speaker verification by computer using speech intensity for temporal registration. 
\newblock \emph{IEEE Transactions on Audio and Electroacoustics}, 21(2):80--89, 1973. 

\bibitem{Kruskal1983}
J.B.~Kruskal and M.~Liberman. 
\newblock The symmetric time-warping problem: From continuous to discrete.
\newblock \emph{Time warps, string edits and macromolecules: The theory and practice of sequence comparison}, pp.
125--161, 1983.

\bibitem{Niennattrakul2009}
V.~Niennattrakul and C.A.~Ratanamahatana.
\newblock Shape averaging under time warping. 
\newblock \emph{IEEE International Conference on Electrical Engineering/Electronics, Computer, Telecommunications and Information Technology}, 2009. 

\bibitem{Niennattrakul2012}
V.~Niennattrakul, D.~Srisai, and C.A.~Ratanamahatana.
\newblock Shape-based template matching for time series data
\newblock \emph{Knowledge-Based Systems}, 26:1--8, 2012.

\bibitem{Norkin1986}
V.~Norkin.
\newblock Stochastic generalized-differentiable functions in the problem of nonconvex nonsmooth stochastic optimization.
\newblock \emph{Cybernetics and Systems Analysis}, 22(6):804--809, 1986.

\bibitem{Oates1999}
T.~Oates, L.~Firoiu, and P.R.~Cohen. 
\newblock Clustering Time Series with Hidden Markov Models and Dynamic Time Warping. 
\newblock IJCAI Workshop on Neural, Symbolic and Reinforcement Learning methods for Sequence Learning, 1999.

\bibitem{Ongwattanakul2009}
S.~Ongwattanakul and D.~Srisai. 
\newblock Contrast enhanced dynamic time warping distance for time series shape averaging classification. 
\newblock \emph{International Conference on Interaction Sciences: Information Technology, Culture and Human}, 2009.

\bibitem{Petitjean2011}
F.~Petitjean, A.~Ketterlin, and P.~Gancarski. 
\newblock A global averaging method for dynamic time warping, with applications to clustering.
\newblock \emph{Pattern Recognition} 44(3):678--693, 2011.

\bibitem{Petitjean2012}
F.~Petitjean and P.~Gan{\c{c}}arski. 
\newblock Summarizing a set of time series by averaging: From Steiner sequence to compact multiple alignment.
\newblock \emph{Theoretical Computer Science}, 414(1):76--91, 2012.

\bibitem{Petitjean2014Code}
F. Petitjean.
\newblock Matlab and Java source code for DBA.
\newblock doi:10.5281/zenodo.10432, 2014.

\bibitem{Petitjean2016}
F.~Petitjean, G.~Forestier, G.I.~Webb, A.E.~Nicholson, Y.~Chen, and E.~Keogh.
\newblock Faster and more accurate classification of time series by exploiting a novel dynamic time warping averaging algorithm. 
\newblock \emph{Knowledge and Information Systems}, 47(1):1--26, 2016.

\bibitem{Rabiner1979}
L.R.~Rabiner and J.G. Wilpon.
\newblock Considerations in applying clustering techniques to speaker-independent word recognition. 
\newblock \emph{The Journal of the Acoustical Society of America}, 66(3): 663--673, 1979.

\bibitem{Sakoe1978}
H.~Sakoe and S.~Chiba. 
\newblock Dynamic programming algorithm optimization for spoken word recognition. 
\newblock \emph{IEEE Transactions on Acoustics, Speech, and Signal Processing}, 26(1):43--49, 1978.

\bibitem{Schultz2016Code}
D.~Schultz and B.J.~Jain.
\newblock Sample Mean Algorithms for Averaging in Dynamic Time Warping Spaces.
\newblock 10.5281/zenodo.216233, 2016.

\bibitem{Shor1985}
N.Z.~Shor. 
\newblock \emph{Minimization Methods for Non-Differentiable Functions}. 
\newblock Springer-Verlag Berlin, Heidelberg, 1985.

\bibitem{Soheily-Khah2015}
S.~Soheily-Khah, A.~Douzal-Chouakria, and E.~Gaussier.
\newblock Progressive and Iterative Approaches for Time Series Averaging. 
\newblock \emph{Workshop on Advanced Analytics and Learning on Temporal Data}, 2015.

\bibitem{Soheily-Khah2016}
S.~Soheily-Khah, A.~Douzal-Chouakria, and E.~Gaussier.
\newblock Generalized k-means-based clustering for temporal data under weighted and kernel time warp.
\newblock \emph{Pattern Recognition Letters}, 75:63--69, 2016.

\bibitem{Somervuo1999}
P.~Somervuo and T.~Kohonen. 
\newblock Self-organizing maps and learning vector quantization for feature sequences. 
\newblock \emph{Neural Processing Letters}, 10(2):151--159, 1999.

\bibitem{Srisai2009}
D.~Srisai and C.A.~Ratanamahatana. 
\newblock Efficient time series classification under template matching using time warping alignment. 
\newblock \emph{IEEE International Conference on Computer Sciences and Convergence Information Technology}, 2009. 

\bibitem{Zangwill1969}
W.~Zangwill.
\newblock \emph{Nonlinear Programming: A Unified Approach},
\newblock Prentice-Hall, 1969.
\end{thebibliography}
\end{document}